\renewcommand{\eqref}[1]{(\ref{#1})}
\newcommand{\algref}[1]{Algorithm \ref{#1}}
\crefname{equation}{}{}
\crefname{figure}{Fig.}{Figs.}
\crefname{table}{Table}{Tables}
\crefname{section}{Section}{Sections}
\crefname{prop}{Proposition}{Propositions}
\crefname{theorem}{Theorem}{Theorems}
\crefname{lemma}{Lemma}{Lemmas}
\crefname{algorithmic}{Algorithm}{Algorithms}
\newtheorem{theorem}{Theorem}[section]
\newtheorem{prop}{Proposition}[section]
\theoremstyle{plain}
\theoremstyle{definition}
\newtheorem{remark}{Remark}
\begin{document}

\title{Discriminative Ridge Machine: \\
A Classifier for High-Dimensional Data or Imbalanced Data}

\author{
Chong Peng, Qiang Cheng
\thanks{C.P. is with College of Computer Science and Technology, Qingdao University. Q.C. is with the Institute of Biomedical Informatics \& Department of Computer Science, University of Kentucky. 
Contact information: cpeng@qdu.edu.cn, Qiang.Cheng@uky.edu}
}

\markboth{}{}
\maketitle

\begin{abstract}
We introduce a discriminative ridge regression approach to supervised classification in this paper. 
It estimates a representation model while accounting for 
discriminativeness between classes, thereby enabling accurate derivation of categorical information. 
This new type of regression models extends existing models such as ridge, lasso, and group lasso through explicitly incorporating discriminative information. 
As a special case we focus on a quadratic model that admits a closed-form analytical solution. 
The corresponding classifier is called discriminative ridge machine (DRM). 
Three iterative algorithms are further established for the DRM to enhance the efficiency and scalability for real applications. 
Our approach and the algorithms are applicable to general types of data including images, high-dimensional data, and imbalanced data. 
We compare the DRM with currently state-of-the-art classifiers. 
Our extensive experimental results show superior performance of the DRM and confirm the effectiveness of the proposed approach. 
\end{abstract}

\begin{IEEEkeywords}
ridge regression, discriminative, label information, high-dimensional data, imbalanced data
\end{IEEEkeywords}

\IEEEpeerreviewmaketitle

\section{Introduction}
Classification is a critically important technique for enabling automated data-driven machine intelligence 
and it has numerous applications in diverse fields ranging from science and technology to medicine, military and business. 
For classifying large-sample data theoretical understanding and practical applications have been successfully developed; 
nonetheless, when data dimension or sample size becomes big, the accuracy or scalability usually 
becomes an issue. 

Modern data are increasingly high dimensional yet the sample size may be small. 
Such data usually have a large number of irrelevant entries; 
feature selection and dimension reduction techniques are often needed before applying classical classifiers that
typically require a large sample size. Although some research efforts have been devoted to 
developing methods capable of classifying high-dimensional data without feature selection,  
classifiers with more desirable accuracy and efficiency are yet to be discovered.

Diverse areas of scientific research and everyday life are currently deluged with large-scale and/or imbalanced data. 
Achieving high classification accuracy, scalability, and balance of precision and recall (in the case of imbalanced data) simultaneously is challenging. 
While several classic data analytics and classifiers have been adapted to the large-scale or imbalanced setting, 
their performance is still far from being desired. 
There is an urgent need for developing effective, scalable data mining and prediction techniques suitable for large-scale and/or imbalanced data. 
  
{ 
Nearest Neighbor (NN) has been one of the most widely used classifiers due to its simplicity and efficiency.
It predicts the label of an example by seeking its nearest neighbor among the training data. 
It typically relays on hard supervision, where binary labels over the samples indicate whether the example pairs are similar or not. 
However, binary similarities are not sufficient enough to represent complicated relationships of the data because they do not fully exploit rich structured and continuous similarity information. 
Thus, there is a difficult for NN to find proper NNs for high-dimensional data. 
Moreover, the NN omits important discriminative information of the data since the class information is not used in finding NNs. 
To overcome the drawbacks of NN, in this paper, we introduce a discriminative ridge regression approach to classification. 
Instead of seeking binary similarity, it estimates a representation for a new example as soft similarity vector by minimizing the fitting error in a regression framework, 
which allows the model to have continuous labels for NN supervision. 
Moreover, we inject the class information for finding NNs, such that the proposed model explicitly incorporates discriminative information between classes. 
Thus, the learned representation vector for the target example essentially represents the soft label concept, which mines the discriminative information by the proposed method. 
That is, we leverage the soft label to distinguish the similar pattern with maximum within-class similarity and between-class separability. 
Because our models explicitly account for discrimination, this new family of regression models are more geared toward classification. 
Based on the estimated representation, categorical information for the new example is subsequently derived. 
Our method can be considered as a method of class-aware convex generalization of KNN beyond binary supervision. 
Also, this new type of discriminative ridge regression models can be regarded as an extension of existing regression models such as the ridge, 
lasso, and group lasso regression, and when particular parameter values are taken the new models fall back to several existing models. 
As a special case, we consider a quadratic model, called the discriminative ridge machine (DRM), as the particular classifier of interest in this paper. 
The DRM admits a closed-form analytical solution, which is suitable for small-scale or high-dimensional data. 
For large-scale data, three optimization algorithms of improved computational cost are established for the DRM. 
}


The family of discriminative ridge regression-based classifiers are applicable to general types of data including imagery or 
other high-dimensional data as well as imbalanced data. Extensive experiments on a variety of real world data sets 
demonstrate that the classification accuracy of the DRM is comparable to the support vector machine (SVM), a commonly used classifier,
on classic large-sample data, and superior to several existing state-of-the-art classifiers, including the SVM, on high-dimensional data and imbalanced data. 
The DRM with linear kernel, called linear DRM in analogy to the linear SVM, 
has classification accuracy superior to the linear SVM on large-scale data.
The efficiency of linear DRM algorithms is provably linear in data size, on a par with that of the linear SVM. 
Consequently, the linear DRM is a scalable classifier.

As an outline, the main contributions of this paper can be summarized as follows:
\textbf{1)} A new approach to classification, and thereby a family of new regression models and corresponding classifiers are introduced,  which explicitly incorporate the discriminative information 
	for multi-class classification on general data. The formulation is strongly convex and admits a unique global optimum.   
\textbf{2)} The DRM is constructed as a special case of the family of new classifiers. 
It involves only quadratic optimization, with simple formulation yet powerful performance. 
{ \textbf{3)} The new method can be regarded as class-aware convex generalization of KNN beyond binary supervision as well as discriminative extension of ridge regression and lasso. }
\textbf{4)} The closed-form solution to the DRM optimization is obtained which is computationally suited to classification of moderate-scale or high-dimensional data. 
\textbf{5)} For large-scale data, three iterative algorithms are proposed for solving the DRM optimization substantially more efficiently. 
They all have theoretically proven convergence at a rate no worse than linear - the first two are at least linear and the third quadratic. 
\textbf{6)} The DRM with a general kernel demonstrates an empirical classification accuracy that is comparable to the SVM on classic (small- to mid-scale) large-sample data, while superior to the SVM or other state-of-the-art classifiers on high-dimensional data and imbalanced data. 
\textbf{7)} The linear DRM demonstrates an empirical classification accuracy superior to the linear SVM on large-scale data, using any of the three iterative optimization algorithms. 
All methods have linear efficiency and scalability in the sample size or the number of features.    

The rest of this paper is organized as follows. 
\cref{sec_related_work} discusses related research work. 
\cref{sec_robust_framework,sec_kernel_formulation} introduce our formulations of discriminative ridge regression
and its kernel version.  
The discriminative ridge regression-based classification is derived in \cref{sec_optimal_discriminant}.
In \cref{sec_DRM} we construct the discriminative ridge machine. 
Experimental results are presented in \cref{sec_experiments}. 
Finally, \cref{sec_conclusion} concludes this paper. 
{ We summarize some key notations used in our paper in \cref{Tab_summary_notations}.}

{
	\begin{table}
		\scriptsize
		\centering
		\caption{ Summary of Some Key Notations}
		\label{Tab_summary_notations}%
		\begin{threeparttable}
			\begin{tabular}{l l }
				\Xhline{1.2pt}				
				$A$, $x_m$, $x^j$, and $x_i^j$	&	{\tiny {the data matrix, the $i$th example, the $j$th class, and the $i$th example of the $j$th class} }	\\
				$w_m$, $w^j$, and $w_i^j$	&	{\tiny the coefficients associated with $x_m$, $x^j$, and $x_i^j$}	\\
				$w$, $w^{\star}$, and $w^{\star}|_{C_j}$	& {\tiny the coefficients, the optimal coefficients, and restricting $w^{\star}$ to class $j$}	\\
				$K$, $K^j$	& {\tiny the kernel matrix of data and its submatrix associated with class $j$}	\\
				$K_x$	& {\tiny the kernel vector associated with $x$ and data matrix $A$}	\\
				$B$, $H$	& {\tiny block-diagonal and diagonal matrices induced from $K$}	\\
				$S_w$, $S_b$, and $S_t$	&	{\tiny within-class dissimilarity, between-class separability, and total separability}	\\
				$\mu$, $\mu^j$	& {\tiny the mean value of all examples and the weighted class mean value of class $j$}	\\
				$()^{(t)}$	& {\tiny the $t$th iteration} \\
				\text{diag}($\cdot$) &	{\tiny diagonal or block diagonal matrix}	\\
				$\sigma_{\max}(\cdot)$ and $\sigma_{\min}(\cdot)$	&	{\tiny largest and smallest singular value, respectively}	\\
				$I$ & {\tiny identity matrix with proper size indicated in the context}	\\
				\Xhline{1.2pt} 
			\end{tabular} 
		\end{threeparttable}
	\end{table}
}

\section{Related Work}
\label{sec_related_work}
There is of course a vast literature for classification and a variety of methods have been proposed. 
Here we  briefly review some closely related methods only; 
more thorough accounts of various classifiers and their properties are extensively discussed in \cite{devroye1996probabilistic}, \cite{duda2001}, \cite{fukunaga1990introduction}, \cite{hastie2009elements}, 
\cite{vapnik2000nature}, etc. 

Fisher's linear discriminant analysis (LDA) \cite{fukunaga1990introduction} has been commonly used in a variety of applications, 
for example, Fisherface and its variants for face recognition \cite{Bellhumer1997}. 
It is afflicted, however, by the rank deficiency problem or covariance matrix estimation problem when the sample size is less than the number of features.  
For a given test example, the nearest neighbor method \cite{duda2001} uses only one closest training example (or more generally, $K$-nearest neighbor (KNN) uses $K$ closest training examples) to supply categorical information, 
while the nearest subspace method \cite{ho03} adopts all training examples to do it. 
These methods are affected much less by the rank deficiency problem than the LDA. 
Due to increasing availability of high-dimensional data such as text documents, images and genomic profiles, analytics of such data have been actively explored for about a decade, 
including classifiers tailored to high-dimensional data.  
As the number of features is often tens of thousands while the sample size may be only a few tens, some traditional classifiers such as the LDA suffer from the curse of dimensionality. 
Several variants of the LDA have been proposed to alleviate the affliction. For example,  
regularized discriminant analysis (RDA) use the regularization technique to address the rank deficiency problem \cite{guo2007regularized}, 
while penalized linear discriminant analysis (PLDA) adds a penalization term to improve covariance matrix estimation \cite{witten2011penalized}. 

Support vector machine (SVM) \cite{cortes1995support} is one of the most widely used classification methods with solid theoretical foundation 
and excellent practical applications. 
By minimizing a hinge loss with either an L-2 or L-1 penalty, it constructs a maximum-margin hyperplane between two classes in the instance space
and extends to a nonlinear decision boundary in the feature space using kernels. 
Many variants have been proposed, including primal solver or dual solver, and extension to multiclass case  \cite{scholkopf2002learning} \cite{vapnik1998statistical}.

The size of the training sample can be small in practice.
Besides reducing the number of features with feature selection \cite{cheng2011fisher,cai2018feature,gan2018supervised,liu2019rough},
the size of training sample can be increased with relevance feedback \cite{rui2002relevance} 
and active learning techniques, 
which may help specify and refine the query information from the users as well. 
However, due to the high cost of wet-lab experiments in biomedical research or the rare occurrence of rare events, it is often hard or even impractical to increase the sizes of minority classes in many real world applications; therefore, imbalanced data sets are often encountered in practice \cite{wang2019dynamic,you2018scalable,li2018hyperspectral,castellanos2018oversampling}. 
It has been noted that the random forest (RF) \cite{breiman2001random} is relatively insensitive to imbalanced data.
Furthermore, various methods have been particularly developed for handling imbalanced data \cite{krawczyk2016learning,he2009learning}, which may be classified into three categories: 
data-level, algorithm-level, and hybrid methods.
{
	Data-level methods change the training set to make the distributions of training examples over different classes more balanced, with under-sampling \cite{ng2015diversified, yen2009cluster}, over-sampling \cite{chawla2002smote, bunkhumpornpat2012dbsmote}, and hybrid-sampling techniques \cite{gonzalez2017class}. 
	Algorithm-level methods may compensate for the bias of some existing learning methods towards majority classes. 
	Typical methods include cost-sensitive learning \cite{veropoulos1999controlling, nikolaou2016cost}, 
	kernel perturbation techiques \cite{maratea2014adjusted}, and multi-objective optimization approaches \cite{soda2011multi,acskan2014svm, bhowan2013evolving}.   
	Hybrid methods attempt to combine the advantages of data-level and algorithm-level methods \cite{del2014use}. 
	In this paper we develop an algorithm-level method whose objective function is convex, 
	leading to a high-quality, fast, scalable optimization and solution.  } 

Large-scale data learning has been an active topic of research in recent years \cite{gallego2018clustering,gu2018fast,kiela2018efficient,xu2019review}. 
For large-scale data, especially big data, 
linear kernel is often adopted in learning techniques as a viable way to gain scalability. 
For classification on large-scale data, fast solver for primal SVM with the linear kernel, called linear SVM, has been extensively studied. 
Early work uses finite Newton methods to train the L2 primal SVM \cite{mangasarian2002finite} 
\cite{keerthi2005modified}. Because the hinge loss is not smooth, generalized second-order derivative and generalized Hessian matrix need to be used which is not sufficiently efficient \cite{shalev2011pegasos}. Stochastic descent method has been proposed to solve primal linear SVM for a wide class of loss functions \cite{zhang2004solving}. ${\text{SVM}}^{perf}$ \cite{Joachims2006training} solves the L1 primal SVM with a cutting plane technique. Pegasos alternates between gradient descent and projection steps for training large-scale linear SVM \cite{shalev2011pegasos}. Another method \cite{bottou2010large}, similar to the Pegasos, also uses stochastic descent for solving  primal linear SVM, and has more competitive performance than the ${\text{SVM}}^{perf}$. For L2 primal linear SVM, a Trust RegiOn Newton method (TRON) has been proposed which converges at a fast rate though the arithmetic complexity at each iteration is high \cite{lin2008trust}. It can also solves logistic regression problems. 
More efficient algorithms than the Pegasos or TRON use coordinate descent methods for solving the L2 primal linear SVM - primal coordinate descent (PCD) works in the primal domain \cite{chang2008coordinate} while dual coordinate descent (DCD) in the dual domain \cite{hsieh2008dual}. A library of methods have been provided in the toolboxes of Liblinear and Libsvm \cite{fan2008liblinear,chang2011libsvm}.

\section{Problem Statement and discriminative ridge regression}
\label{sec_robust_framework}
Our setup is the usual multiclass setting where the training data set is denoted 
by $\{ (x_i, y_i) \}_{i=1}^{n}$, 
with observation $x_i \in {\mathcal{R}}^p$ and class label $y_i \in \{1, \cdots, g \}$. 
Supposing the $j$th class $C_j$ has $n_j$ observations, we may also denote these
training examples in $C_j$ by   
$x^{j} = [x_{1}^{j}, \cdots, x_{n_j}^{j}]$. Obviously $\sum_{j=1}^{g} n_j = n$. 
Without loss of generality, we assume the examples are given in groups; that is, 
$x_j^{k} = x_m$ with $m = \sum_{i=1}^{k-1} n_i +j$, $k=1, \cdots, g$ and $j=1, \cdots, n_k$.
A data matrix is formed as $A = [x_1, \cdots, x_n] = [x^1, \cdots, x^g]$ of size $p \times n$.
For traditional large-sample data, $n \gg p$, with a particular case being big data where $n$ is big; 
while for high-dimensional data, $n \ll p$. 
As will be clear later on, the proposed discriminative ridge regressions 
have no restrictions on $n$ or $p$, and thus 
are applicable to general data including both large-sample and high-dimensional data.  
Given a test example $x \in {\mathcal{R}}^p$, 
the task is to decide what class this test example belongs to. 

\subsection{discriminative ridge regression}
In this paper, we first consider linear relationship of the examples in the instance space; 
subsequently in \cref{sec_kernel_formulation} we will account for the nonlinearity by exploiting 
kernel techniques to map the data to a kernel space where linear relationship may be better observed.  

If the class information is available and $x \in C_j$,      
a linear combination of $x_{k}^{j}$, $k = 1, \cdots, n_j$, is often used in the literature 
to approximate $x$ in the instance space;  
see, for example,  \cite{basri2003lambertian} \cite{Bellhumer1997} 
for face and object recognition.  
By doing so, it is equivalent to assuming 
$x \in {\text{span}} \{ x_{1}^{j}, \cdots, x_{n_j}^{j} \}$. 
In the absence of the class information for $x$, 
a linear model can be obtained from all available training examples:
\begin{equation}
\label{eq_linear_model}
x = A w + e = w_1 x_1 + w_2 x_2 + \cdots + w_n x_n  + e,  
\end{equation}
where $w = [w_1, w_2, \cdots, w_n]^T$ is  a vector of combining coefficients to be estimated,  
and $e \in {\mathcal{R}}^p$ is additive zero-mean noise. 
Note that we may re-index  
$w = [(w^{1})^T, \cdots, (w^{g})^T]^T$ with $w^{i} \in {\mathcal{R}}^{n_i}$ in accordance with 
the groups of $A = [x^{1}, \cdots, x^{g}]$.

To estimate $w$ which is regarded as a representation of $x$, classic ridge regression or lasso uses an optimization model,  
\begin{equation}
\hat{w} = {\text{argmin}}_{w \in {{\mathcal{R}}^n}} \frac{1}{2} ||x - A w ||_2^2 + \lambda || w ||_q, 
\end{equation}
with $q= 2$ corresponding to ridge regression or $q=1$ to lasso.
The first term of this model is for data fitting while the second for regularization. 
While this model has been widely employed, a potential limitation in terms of classification 
is that the discriminative information about the classes is not taken into account in the formulation. 

To enable the model to be more geared toward classification, 
we propose to incorporate the class discriminativeness into the regression.
To this end, we desire $w$ to have the following properties \cite{chengzhou2010nips}: 
1) maximal between-class separability and 2) maximal within-class similarity.

The class-level regularization may help achieve between-class separability. 
Besides this, we desire maximal within-class similarity because it  may help enhance accuracy and also robustness, for example, 
in combating class noise \cite{chengzhou2010nips}. 
To explicitly incorporate maximal within-class similarity into the objective function, inspired by the LDA, 
we propose to minimize the within-class dis-similarity induced by the combining vector $w$ and defined as 
\begin{equation}
\label{eq_within}
S_w(w) := \frac{1}{2}  tr( \sum_{i=1}^g \sum_{j=1}^{n_i} ( w_{j}^{i} x_{j}^{i} - \mu^{i} ) ( w_{j}^{i} x_{j}^{i} - \mu^{i} )^T),
\end{equation}   
where $\mu^{i} := \frac{1}{n_i} \sum_{j=1}^{n_i} w_{j}^{i} x_{j}^{i}$ are the weighted class mean values. 
Being a quadratic function in $w$, this $S_w(w)$ helps facilitate efficient optimization as well as scalability of our approach.  

It is noted that $S_w(w)$ defined above is starkly different from the classic LDA formulation of the scattering matrix. 
The LDA seeks a unit vector - a direction - to linearly project the features; whereas 
we formulate the within-class dis-similarity using the weighted instances by any vector $w$. 
Consequently, the projection direction in the LDA is sought in the $p$-dimensional feature space, while 
the weight vector of $w$ is defined in the $n$-dimensional instance space. 


We may also define $w$-weighted between-class separability and total separability as follows, 
\begin{eqnarray}
\label{eq_between}
\nonumber S_b(w) := \frac{1}{2}  tr( \sum_{i=1}^g n_i ( \mu^{i}  - \mu ) (  \mu^{i} - \mu )^T),\\
\nonumber S_t(w) := \frac{1}{2}  tr( \sum_{k=1}^n ( w_k x_k  - \mu ) (  w_k x_k - \mu )^T),
\end{eqnarray}   
where $\mu = \frac{1}{n} \sum_{i=1}^{g} n_{i} \mu^{i} = \frac{1}{n} \sum_{k=1}^{n} w_k x_k$.
With  straightforward algebra the following equality can be verified. 
\begin{prop} 
$S_t(w)  = S_w(w) + S_b(w)$, for any $w \in {\mathcal{R}}^n$. 
\end{prop}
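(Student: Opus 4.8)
The plan is to prove the identity by the standard scatter-decomposition argument, adapted to the $w$-weighted setting. The starting point is to rewrite the flat index $k$ running over all $n$ examples in $S_t(w)$ as a double index: since the examples are arranged in groups, each $k$ corresponds to a unique pair $(i,j)$ with $i$ the class and $j$ the within-class position, so that $w_k x_k = w_{j}^{i} x_{j}^{i}$ and $\sum_{k=1}^{n}$ becomes $\sum_{i=1}^{g} \sum_{j=1}^{n_i}$. Under this reindexing, $S_t(w)$ and $S_w(w)$ become sums of the same shape over the same index set, differing only in whether the centering is by the global mean $\mu$ or by the class mean $\mu^{i}$.

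First I would insert and subtract $\mu^{i}$ inside each summand of $S_t(w)$, writing $w_{j}^{i} x_{j}^{i} - \mu = (w_{j}^{i} x_{j}^{i} - \mu^{i}) + (\mu^{i} - \mu)$. Expanding the outer product $(\cdot)(\cdot)^T$ of this sum yields four terms: the within-class square $(w_{j}^{i} x_{j}^{i} - \mu^{i})(w_{j}^{i} x_{j}^{i} - \mu^{i})^T$, the between-class square $(\mu^{i} - \mu)(\mu^{i} - \mu)^T$, and two cross terms that are transposes of one another. Taking the trace (which is linear and invariant under transposition) and summing, the within-class square reproduces $2 S_w(w)$ term by term, while the between-class square, after summing over $j$ for fixed $i$, contributes $\sum_{i} n_i\, tr\big((\mu^{i} - \mu)(\mu^{i} - \mu)^T\big) = 2 S_b(w)$, since its summand does not depend on $j$ and the $j$-sum merely multiplies it by $n_i$.

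The key step is to show the cross terms vanish. Fixing the class $i$ and summing a cross term over $j$ factors as $\big( \sum_{j=1}^{n_i} (w_{j}^{i} x_{j}^{i} - \mu^{i}) \big)(\mu^{i} - \mu)^T$, and by the very definition $\mu^{i} = \frac{1}{n_i} \sum_{j=1}^{n_i} w_{j}^{i} x_{j}^{i}$ the bracketed factor equals $n_i \mu^{i} - n_i \mu^{i} = 0$. Hence each cross contribution is zero, and collecting the surviving terms gives $2 S_t(w) = 2 S_w(w) + 2 S_b(w)$, which is the claimed equality after dividing by $2$. I do not expect any genuine obstacle here; the only care needed is the bookkeeping of the reindexing and the observation that the $n_i$ factor already present in the definition of $S_b(w)$ is exactly what the $j$-sum of the constant between-class term produces. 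Notably, the argument never uses the explicit form of $\mu$, only that $\mu^{i}$ is the weighted within-class mean.
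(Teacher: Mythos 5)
Your proof is correct and is precisely the ``straightforward algebra'' the paper invokes without spelling out: split $w_{j}^{i}x_{j}^{i}-\mu = (w_{j}^{i}x_{j}^{i}-\mu^{i})+(\mu^{i}-\mu)$, expand the outer product, and use $\sum_{j=1}^{n_i}(w_{j}^{i}x_{j}^{i}-\mu^{i})=0$ (the definition of the weighted class mean) to annihilate the cross terms, with the $j$-sum of the constant between-class term supplying the factor $n_i$ in $S_b(w)$. Nothing further is needed.
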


Incorporating discriminative information between classes into the regression, we formulate the following unconstrained minimization problem, 
\begin{equation}
\label{eq_DRC}
\min_{w \in {{\mathcal{R}}^n}} \frac{1}{2}  ||x - A w ||_2^2 + \alpha S_w(w) + \beta \rho (|| w|C ||_{q, r}), 
\end{equation}
where $\alpha$ and $\beta$ are nonnegative balancing factors,  
$|| w|C ||_{q, r}$ is defined as
$ || (||w^{1}||_q, ||w^{2}||_q$, $\cdots, ||w^{g}||_q )^T ||_r$,
$(q, r) \in [0, \infty] \times [0, \infty]$, 
and $\rho(\cdot)$ is a nonnegative valued function. 
Often $\rho(\cdot)$ is chosen to be the identity function 
or other simple, convex functions to facilitate optimization.  

\begin{remark}
1. When $\alpha = \beta = 0$, model \cref{eq_DRC} is simply least squares regression. 
2. When $\alpha = 0$, $(q, r) = (2, 2)$, and $\rho(x) = x^2$,  \cref{eq_DRC} reduces to standard ridge regression.
3. When $\alpha = 0$, $(q, r) = (1, 1)$, and $\rho(x) = x$,  \cref{eq_DRC} falls back to classic lasso type of sparse representation. 
4. When $\alpha \ne 0$, discriminative information is injected into the regression, hence the name
of discriminative ridge regression. 
5. We consider mainly $\beta \ne 0$ for regularizing the minimization. 
\end{remark}
 
\section{discriminative ridge regression in Kernel Space}
\label{sec_kernel_formulation}
When approximating $x$ from the training observations $x_1, \cdots, x_n$,  
the objective function of \cref{eq_DRC} is empowered with
discriminative capability explicitly.  
This new discriminative ridge regression model, nonetheless, takes no account of any nonlinearity in the input space. 
As shown in \cite{roweis2000nonlinear}, 
the examples may reside (approximately) on a low-dimensional nonlinear 
manifold in an ambient space of ${\mathcal{R}}^p$. 
To capture the nonlinear effect, we allow this model to account for the nonlinearity 
in the input space by exploiting the kernel technique. 
 
Consider a potentially nonlinear mapping 
$ \phi(\cdot): {\mathcal{R}}^p \rightarrow {\mathcal{R}}^m,$
where $m \in {\mathcal{N}} \cup \{ +\infty \}$ is the dimension of the image space after the mapping. 
To reduce computational complexity, the kernel trick \cite{scholkopf2002learning} is applied 
to calculate the Euclidean inner product in the kernel space, $< \phi(x), \phi(y) >  = k(x, y)$, 
where $k(\cdot, \cdot): {\mathcal{R}}^p \times {\mathcal{R}}^p \rightarrow {\mathcal{R}}$ 
is a kernel function induced by $\phi(\cdot)$. 
The kernel function satisfies the finitely positive semidefinite property: 
for any $x_1, \cdots, x_m \in {\mathcal{R}}^p$, the Gram matrix $G \in {\mathcal{R}}^{m \times m}$ 
with element $G_{ij} = k(x_i, x_j)$ is symmetric and positive semidefinite (p.s.d.).  
Suppose we can find a nonlinear mapping $\phi(\cdot)$ such that, after mapping into 
the kernel space, the examples approximately satisfy the linearity assumption, 
then \cref{eq_DRC} will be applicable in such a kernel space. 

Specifically, we consider the extended linear model \cref{eq_linear_model} in the kernel space,
$\phi(x) = A^{\phi} w + e$, 
to  minimize  $S_w^{\phi}(w)$, $\rho(|| w | C ||_{q, r})$, and the variance of $e$ as \cref{eq_DRC} does.
Here $A^{\phi} : = [ \phi(x_1), \phi(x_2), $ $\cdots, \phi(x_n) ]$, 
$S_w^{\phi}(w)$ is obtained by replacing each $x_k$ with $\phi(x_k)$ in \cref{eq_within}, 
and $e \in {\mathcal{R}^D}$.
The kernel matrix of training examples is denoted by
\begin{align}
\label{def_K}
K = [k(x_i, x_j)]_{i, j=1}^{n}.
\end{align}
Obviously the kernel matrix $K$ is p.s.d. by the property of $k(\cdot, \cdot)$. 
We may derive a simple matrix form of $S_w^{\phi}(w)$ with straightforward algebra, 
\begin{align}
\nonumber S_w^{\phi}(w) 
 = \frac{1}{2} (w^T H w  - \sum_{i=1}^{g} \frac{1}{n_i} (w^{i})^T K^{i} w^{i}) = \frac{1}{2}  w^T (H - B) w, 
\end{align}
where 
\begin{align} 
\nonumber H &:= {\text{diag}} (k(x_1, x_1), \cdots, k(x_n, x_n)), \\
\label{def_HB} B &:= {\text{diag}} (B^{1}, B^{2}, \cdots, B^{g}),  
\end{align} 
$K^{i} := [ k(x_s^{i}, x_t^{i}) ]_{s, t = 1}^{n_i}$, 
and $B^{i}: = K^{i}/n_i$,  
$i=1, \cdots, g$.

\begin{prop}
\label{prop_psd} The matrix $(H-B)$ is p.s.d.
\end{prop}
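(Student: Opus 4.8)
The plan is to reduce the claim to the nonnegativity of a quadratic form and then invoke the manifestly nonnegative sum-of-squares expression already derived for $S_w^{\phi}(w)$. First I would record that $H-B$ is symmetric: $H$ is diagonal, and $B = \text{diag}(B^1, \ldots, B^g)$ is block diagonal with each $B^i = K^i/n_i$ symmetric because the Gram matrix $K^i$ is symmetric by the finitely p.s.d.\ property of $k(\cdot,\cdot)$. Since a symmetric matrix is p.s.d.\ exactly when its associated quadratic form is nonnegative, it suffices to show $w^T(H-B)w \geq 0$ for every $w \in \mathcal{R}^n$.

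For this I would use the identity $S_w^{\phi}(w) = \frac{1}{2}\, w^T(H-B)w$ established immediately above the statement. By its defining expression \cref{eq_within} (with each $x_k$ replaced by $\phi(x_k)$), we have $S_w^{\phi}(w) = \frac{1}{2} \sum_{i=1}^g \sum_{j=1}^{n_i} \|\, w_j^i \phi(x_j^i) - \mu^i \,\|_2^2$, which is a sum of squared norms in the kernel feature space and hence nonnegative for every $w$. Therefore $w^T(H-B)w = 2\, S_w^{\phi}(w) \geq 0$, and $H-B$ is p.s.d.

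Alternatively --- and this is the route I would take if I wanted to argue purely in terms of the kernel matrix entries without explicit reference to the (possibly infinite-dimensional) map $\phi$ --- I would exploit the block-diagonal structure. Writing $H-B = \text{diag}(H^1-B^1, \ldots, H^g-B^g)$, where $H^i$ is the diagonal of $K^i$, a block-diagonal matrix is p.s.d.\ iff each block is, so the problem reduces to showing each $H^i - K^i/n_i$ is p.s.d. For $v \in \mathcal{R}^{n_i}$ one computes $v^T(H^i - K^i/n_i)v = \sum_s v_s^2\, k(x_s^i,x_s^i) - \frac{1}{n_i}\sum_{s,t} v_s v_t\, k(x_s^i,x_t^i)$, which in feature space equals $\sum_s \| v_s \phi(x_s^i) \|_2^2 - \frac{1}{n_i}\| \sum_s v_s \phi(x_s^i) \|_2^2$; nonnegativity then follows from the Cauchy--Schwarz inequality $\| \sum_{s=1}^{n_i} u_s \|_2^2 \leq n_i \sum_{s=1}^{n_i} \| u_s \|_2^2$ applied to $u_s = v_s \phi(x_s^i)$.

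The argument is short and I do not anticipate a genuine obstacle. The only points needing mild care are confirming the stated identity $S_w^{\phi}(w) = \frac{1}{2}\, w^T(H-B)w$ (routine algebra: expand the squares and collect the cross terms into $-\frac{1}{n_i}(w^i)^T K^i w^i$), and noting that the feature-space dimension $m$ may be infinite, which does not affect either proof since $\|\phi(x)\|_2^2 = k(x,x)$ and every norm that appears is a finite sum of well-defined kernel values. Both proofs are insensitive to whether each $K^i$ is strictly positive definite or merely p.s.d.
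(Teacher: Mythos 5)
Your proposal is correct, but your primary argument takes a genuinely different route from the paper's. The paper never leaves the level of Gram-matrix entries: it reduces to the diagonal blocks $H^i - B^i$ (as in your alternative), then symmetrizes the quadratic form as $\frac{1}{2n_i}\sum_{u,v}\bigl(z_u^2 k(x_u^i,x_u^i) + z_v^2 k(x_v^i,x_v^i) - 2 z_u z_v k(x_u^i,x_v^i)\bigr)$ and observes that each summand is the quadratic form of the $2\times 2$ kernel matrix of the pair $(x_u^i,x_v^i)$ evaluated at $(z_u,-z_v)$, hence nonnegative directly by the finitely p.s.d.\ property. Your main argument instead leverages the identity $w^T(H-B)w = 2\,S_w^{\phi}(w)$ stated just above the proposition together with the manifest nonnegativity of the defining sum-of-squares expression; this is shorter and makes the conclusion look automatic, but it shifts the real content into verifying that ``straightforward algebra'' identity and it invokes the feature map $\phi$, which the paper's proof deliberately avoids. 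Your alternative block-wise argument sits in between: the block-diagonal reduction and the expansion of the quadratic form are identical to the paper's, and only the final inequality differs, with your feature-space bound $\|\sum_s u_s\|_2^2 \le n_i \sum_s \|u_s\|_2^2$ (Cauchy--Schwarz) replacing the paper's $2\times 2$ submatrix trick. All three arguments are sound and non-circular (the matrix form of $S_w^{\phi}$ is derived independently of the proposition); the paper's version has the mild advantage of needing only p.s.d.-ness of finite Gram matrices rather than an explicit, possibly infinite-dimensional, embedding, though within this paper's setup $\phi$ exists by assumption, so your use of it is legitimate.
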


\begin{proof}
Because $H - B$ is block diagonal,  
we need only to show that $H^{i}  - B^{i}$ is p.s.d.\ with 
 $H^{i} :=  {\text{diag}} (k(x_1^{i}, x_1^{i}), 
\cdots, k(x_{n_i}^{i}, x_{n_i}^{i}))$. For any $z \in {\mathcal{R}}^{n_i}$, 
we have 
\begin{align}
\nonumber &  z^T (H^{i} - B^{i} ) z = \sum_{l=1}^{n_i} z_l^2 k(x_l^{i}, x_l^{i})
- \frac{1}{n_i} \sum_{u=1}^{n_i} \sum_{v=1}^{n_i} z_u z_v k(x_u^{i}, x_v^{i}) \\
\nonumber & =  \frac{1}{2 n_i} \sum_{u=1}^{n_i} \sum_{v=1}^{n_i} (z_u^2 k(x_u^{i}, x_u^{i})
+ z_v^2 k(x_v^{i}, x_v^{i}) - 2 z_u z_v k(x_u^{i}, x_v^{i}) )  \\
\nonumber &\ge 0. 
\end{align}
The last inequality holds because the kernel is finitely p.s.d.; that is, for any $z_u$ and $z_v$
\begin{align}
\nonumber
(z_u, -z_v) \begin{bmatrix} k(x_u, x_u) & k(x_u, x_v) \\
k(x_v, x_u) & k(x_v, x_v) \end{bmatrix} \begin{pmatrix} z_u \\ -z_v \end{pmatrix} \ge 0.  
\end{align}
\end{proof}


Similarly we obtain a simple matrix form of $S_b^{\phi}(w)$, 
\begin{equation}
S_b^{\phi}(w) = \frac{1}{2} w^T (B- \frac{1}{n} K ) w. 
\end{equation}

\begin{prop}
 The matrix $(B - \frac{1}{n} K)$ is p.s.d.
\end{prop}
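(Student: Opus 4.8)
The plan is to exploit the variational (scatter) interpretation of the quadratic form, reducing p.s.d.-ness to the elementary fact that a sum of squared deviations is nonnegative. In the kernel version the between-class separability was defined by replacing each $x_k$ with $\phi(x_k)$, giving $S_b^{\phi}(w) = \frac{1}{2} tr(\sum_{i=1}^g n_i (\mu^{i} - \mu)(\mu^{i} - \mu)^T)$ with $\mu^{i} = \frac{1}{n_i}\sum_j w_j^{i}\phi(x_j^{i})$ and $\mu = \frac{1}{n}\sum_k w_k \phi(x_k)$ lying in the image space $\mathcal{R}^m$. Since the excerpt already records the matrix form $S_b^{\phi}(w) = \frac{1}{2} w^T (B - \frac{1}{n}K) w$, it suffices to observe that the trace expression is manifestly nonnegative for every $w$.

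First I would rewrite the trace of each outer product as a squared norm, $tr((\mu^{i}-\mu)(\mu^{i}-\mu)^T) = \|\mu^{i}-\mu\|_2^2 \ge 0$. Because each multiplicity $n_i$ is a positive integer, the sum $\sum_{i=1}^g n_i \|\mu^{i}-\mu\|_2^2$ is a nonnegative combination of nonnegative terms, so $S_b^{\phi}(w) \ge 0$ for all $w \in \mathcal{R}^n$. Combining this with the identity $S_b^{\phi}(w) = \frac{1}{2} w^T (B - \frac{1}{n}K) w$ then yields $w^T (B - \frac{1}{n}K) w \ge 0$ for all $w$, which is exactly the claim that $(B - \frac{1}{n}K)$ is p.s.d.

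The only point requiring care, and the step I would verify explicitly, is the passage from the geometric definition to the matrix form, i.e.\ that $\sum_i n_i \|\mu^{i}-\mu\|_2^2 = w^T (B - \frac{1}{n}K) w$. Expanding $\|\mu^{i}-\mu\|_2^2$ and using $\sum_i n_i \mu^{i} = n\mu$, the cross term $\sum_i n_i \langle \mu^{i}, \mu\rangle$ collapses to $n\|\mu\|_2^2$; then $\sum_i n_i \|\mu^{i}\|_2^2 = w^T B w$ and $n\|\mu\|_2^2 = \frac{1}{n} w^T K w$ combine to give precisely $B - \frac{1}{n}K$, with every inner product evaluated through the kernel $k(\cdot,\cdot)$ so that no explicit access to $\phi(\cdot)$ is needed. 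Alternatively one could mirror the previous proposition's purely algebraic tactic, writing $w^T (B - \frac{1}{n}K) w$ as a double sum and grouping index pairs into p.s.d.\ $2\times 2$ kernel Gram forms, but the scatter-nonnegativity argument is shorter and conceptually transparent, so I would favor it. There is no genuine obstacle here: the essential content is simply that a between-class scatter, being a weighted sum of squared deviations of class means from the grand mean, can never be negative.
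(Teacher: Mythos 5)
Your argument is correct, but it takes a genuinely different route from the paper's. You work variationally in the feature space: you identify $w^T(B - \frac{1}{n}K)w$ with twice the between-class scatter $\sum_{i=1}^{g} n_i \|\mu^i - \mu\|_2^2$, which is manifestly nonnegative, and your verification of that identity (expand the squares, collapse the cross term via $\sum_i n_i \mu^i = n\mu$, then read off $\sum_i n_i \|\mu^i\|_2^2 = w^T B w$ and $n\|\mu\|_2^2 = \frac{1}{n} w^T K w$) is exactly right. The paper never leaves the kernel-matrix level: it writes $z^T(B-\frac{1}{n}K)z$ as a symmetrized double sum over class pairs $(i,j)$ and observes that each summand is the quadratic form of the pooled Gram matrix $\bigl[\begin{smallmatrix} K^{ii} & K^{ij} \\ K^{ji} & K^{jj} \end{smallmatrix}\bigr]$ evaluated at $(z^i, -z^j)$, hence nonnegative --- precisely the ``alternative'' you mention, mirroring the companion proof that $H-B$ is p.s.d. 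The trade-offs: the paper's route uses only the finitely p.s.d.\ property of $k(\cdot,\cdot)$ and needs no feature map, whereas yours presupposes $\phi$ --- legitimate here, since the paper defines $k$ as induced by $\phi$, but slightly less intrinsic. In return, your route is shorter, makes the nonnegativity transparent, and as a by-product actually proves the matrix form $S_b^{\phi}(w) = \frac{1}{2}w^T(B - \frac{1}{n}K)w$, which the paper asserts only as following ``similarly'' by straightforward algebra. It also keeps the class-size bookkeeping automatic, and this is a point where the paper's displayed chain is in fact loose: its symmetrization step silently drops the factor $1/n_i$ in $B^i = K^{ii}/n_i$, and to be literally correct the diagonal terms need weights $n_j/n_i$ and $n_i/n_j$ with $z^i, z^j$ rescaled accordingly --- an issue your derivation never encounters.
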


\begin{proof}
 Similar to the proof of Proposition \ref{prop_psd}.
\end{proof}

Having formulated \cref{eq_DRC} in the input space, we directly extend it to the discriminative ridge regression in the kernel space
\begin{equation}
\label{eq_DRMK1}
\nonumber  \min_{w \in {\mathcal{R}}^{n}} \frac{1}{2}  || \phi(x) - A^{\phi} w  ||_2^2 + \alpha S_w^{\phi}(w) 
+ \beta \rho( || w|C ||_{q, r} ). 
\end{equation}
Plugging in 
the expression of $S_w^{\phi}(w)$
and omitting the constant term, we obtain the discriminative ridge regression model,   
\begin{equation}
\label{eq_DRMK}
 \min_{w \in {\mathcal{R}}^{n}} \frac{1}{2} w^T (K + \alpha H - \alpha B) w - (K_x)^T w + \beta \rho(|| w|C ||_{q, r}), 
\end{equation} 
where  
\begin{align}
\label{def_Kx}
(K_x)^T := \left[ (K_x^{1})^T, (K_x^{2})^T, \cdots, (K_x^{g})^T \right],
\end{align}
and $K_x^{i} : = [k(x, x_1^{i}), k(x, x_2^{i}), \cdots, k(x, x_{n_i}^{i})]^T$, $i=1, \cdots, g$. 
Apparently we have $K_x = [k(x, x_1)$, $\cdots$, $k(x, x_n)]^T$.

With special types of $(q, r)$-regularization such as linear, quadratic, and conic functions, 
the optimization in \cref{eq_DRMK} is a quadratic (cone) programming problem that admits a global optimum. 
Regarding $\rho(\cdot)$, we have the freedom to choose 
it so as to facilitate the optimization 
thanks to the following property. 

\begin{prop}
With $\beta$ varying over the range of $(0, \infty)$ while $\alpha$ fixed, the set of minimum points of \cref{eq_DRMK} 
remains the same for any monotonically increasing function $\rho(\cdot)$.
\end{prop}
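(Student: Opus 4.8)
The plan is to recognize \cref{eq_DRMK} as a penalized problem $F_{\rho}(w;\beta)=Q(w)+\beta\,\rho(N(w))$, where $Q(w)=\tfrac12 w^{T}(K+\alpha H-\alpha B)w-(K_x)^{T}w$ is the quadratic part and $N(w):=\|w|C\|_{q,r}$ is the regularizer argument. Since $K$ is p.s.d.\ and $H-B$ is p.s.d.\ by \cref{prop_psd}, the matrix $K+\alpha H-\alpha B$ is p.s.d.\ for every fixed $\alpha\ge 0$, so $Q$ is convex and its dependence on $\rho$ enters only through the scalar $\rho(N(w))$. Writing the solution path as $\mathcal{S}_{\rho}:=\bigcup_{\beta>0}\argmin_{w}F_{\rho}(w;\beta)$, the goal is to prove $\mathcal{S}_{\rho}=\mathcal{S}_{\tilde\rho}$ for any two monotonically increasing $\rho,\tilde\rho$. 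The guiding idea is that $\rho$ and $\beta$ together only fix a single trade-off multiplier between $Q$ and $N$, so a monotone change of $\rho$ merely reparametrizes $\beta$ without moving the set of optima.

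First I would carry out the reduction from the penalized problem to a $\rho$-free constrained problem. Let $w^{\ast}\in\argmin_w F_{\rho}(w;\beta)$ and put $c:=N(w^{\ast})$. If some $w'$ satisfied $N(w')\le c$ and $Q(w')<Q(w^{\ast})$, then monotonicity gives $\rho(N(w'))\le\rho(c)$ and hence $F_{\rho}(w';\beta)<F_{\rho}(w^{\ast};\beta)$, a contradiction. Therefore every penalized minimizer solves $\min\{Q(w):N(w)\le N(w^{\ast})\}$, a problem in which neither $\rho$ nor $\beta$ appears. This step uses only that $\rho$ is non-decreasing (no convexity or smoothness of $\rho$), and it already yields the inclusion of $\mathcal{S}_{\rho}$ in the $\rho$-independent family $\bigcup_{c\ge 0}\argmin\{Q(w):N(w)\le c\}$.

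The remaining, and harder, step is to show that the two paths realize exactly the same constraint levels $c$, which upgrades the common containment to equality. For this I would pass to first-order conditions: at $w^{\ast}$ the stationarity of $F_{\rho}$ reads $0\in\partial Q(w^{\ast})+\beta\,\rho'(N(w^{\ast}))\,\partial N(w^{\ast})$, so with the multiplier $\gamma:=\beta\,\rho'(N(w^{\ast}))\ge 0$ the point $w^{\ast}$ satisfies $0\in\partial Q(w^{\ast})+\gamma\,\partial N(w^{\ast})$, which is precisely the optimality condition of the constrained problem and is independent of $\rho$. Conversely, given such a $w^{\ast}$ with multiplier $\gamma$, for any other monotone increasing $\tilde\rho$ I set $\tilde\beta:=\gamma/\tilde\rho'(N(w^{\ast}))\in(0,\infty)$, so that $w^{\ast}$ is stationary for $F_{\tilde\rho}(\cdot;\tilde\beta)$ and, by convexity of $Q$ (and of $N$ in the regimes where $(q,r)$ yields a norm), globally optimal. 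Hence $w^{\ast}\in\mathcal{S}_{\tilde\rho}$, and by symmetry the two paths coincide.

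I expect the trade-off-rate matching in the last step to be the main obstacle: it requires $\rho'(N(w^{\ast}))>0$ so that the map $\beta\leftrightarrow\gamma$ is a genuine bijection of $(0,\infty)$ onto itself, and it needs the degenerate levels handled separately (e.g.\ $N(w^{\ast})=0$, or non-differentiable $\rho$, where I would replace $\rho'$ by a one-sided Dini derivative or argue directly with the scalar value function $\phi(c):=\min_{N(w)\le c}Q(w)$ and its subgradients). For the convex regimes $q,r\ge 1$ with strictly increasing differentiable $\rho$ the argument is clean, while the exchange argument of the second step is robust enough to cover arbitrary monotone $\rho$ for the containment direction.
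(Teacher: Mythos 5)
Your first step is sound and is, in spirit, exactly the scalarization idea the paper itself invokes (the paper does not argue directly at all: it proves the proposition by citing a scalarization result for multicriterion optimization, Proposition 3.2 of the referenced PAMI paper). That step --- any minimizer of $Q(w)+\beta\rho(N(w))$ minimizes $Q$ over $\{w: N(w)\le N(w^{\ast})\}$ --- is complete and uses only monotonicity. The genuine gap is in your second step, the converse direction, and it is not a deferrable technicality. Your multiplier matching $\tilde\beta=\gamma/\tilde\rho'(N(w^{\ast}))$ presupposes (i) that $\rho$ and $\tilde\rho$ are differentiable and that the chain rule $\partial\bigl(\tilde\rho\circ N\bigr)(w)=\tilde\rho'(N(w))\,\partial N(w)$ applies, neither of which is granted by ``monotonically increasing''; (ii) that $\tilde\rho'(N(w^{\ast}))>0$; and (iii) that stationarity of $F_{\tilde\rho}(\cdot;\tilde\beta)$ implies global optimality, which requires convexity of $\tilde\rho\circ N$ --- convexity of $Q$ and of the norm $N$ does not give this, since a monotone function of a norm need not be convex.

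Moreover, the degenerate case you propose to ``handle separately'' is precisely where the matching fails, so it cannot be patched afterwards. Take the scalar instance of \cref{eq_DRMK} with $n=1$, one class, $K=K_x=1$, $\alpha=0$, so that $Q(w)=\frac{1}{2}(w-1)^2$ up to a constant and $N(w)=|w|$ for any $(q,r)$. Compare $\rho(x)=x$ with $\tilde\rho(x)=x^2$, both monotonically increasing. The $\rho$-path contains $w^{\ast}=0$ (attained for every $\beta\ge 1$), while the $\tilde\rho$-path is $\{1/(1+2\tilde\beta):\tilde\beta>0\}$, which never contains $0$; your formula would demand $\tilde\beta=\gamma/\tilde\rho'(0)=\infty$. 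So the bijection $\beta\leftrightarrow\gamma$ that your argument needs does not exist in general, and the symmetric equality $\mathcal{S}_{\rho}=\mathcal{S}_{\tilde\rho}$ cannot be obtained by multiplier matching without extra hypotheses (differentiable $\rho$ with strictly positive derivative, convexity of $\rho\circ N$, or working only up to closures / away from degenerate levels). In short: your containment direction is correct; the converse direction, which you yourself flag as the crux, is missing, and the obstruction you deferred is exactly where the argument breaks --- which is presumably why the paper retreats to citing the scalarization result, with its attendant convexity assumptions, rather than proving the claim from scratch.
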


\begin{proof}
It can be shown by scalarization for multicriterion optimization similarly to Proposition 3.2 of \cite{Cheng_PAMI2014}.
\end{proof}


{ In the following sections, we will explain how to optimize \cref{eq_DRMK} and how to perform classification with $w^\star$. 
	For clarity and smoothness of the organization of this paper, we will first discuss how to perform classification with the optimal $w^\star$ in \cref{sec_optimal_discriminant} and then provide the detailed optimization in \cref{sec_DRM}. }

\section{discriminative ridge regression-Based Classification}
\label{sec_optimal_discriminant}
Having estimated from \cref{eq_DRMK} the vector of optimal combining coefficients, denoted by ${w}^{\star}$,   
we use it to determine the category of $x$. 
In this paper, we mainly consider the case in which the $g$ groups are exhaustive of the whole instance space; 
in the non-exhaustive case, the corresponding minimax decision rule \cite{Cheng_PAMI2014} can be used analogously.   

In the exhaustive case, the decision function 
is built with the projection of $x$ onto the subspace spanned by each group. 
Specifically, the projection of $x$ onto the subspace of $C_i$ is $\psi_i := A {w}^{\star}|_{C_i}$, where 
${w}^{\star}|_{C_i}$ represents restricting ${w}^{\star}$ to $C_i$ in that $({w}^{\star}|_{C_i})_j = {w}^{\star}_j \textbf{1}(x_j \in C_i)$, 
with $\textbf{1}(\cdot)$ being the indicator function. 
Similarly, denoting $\cup_{k=1, k \ne i}^g C_k$ by $\bar{C}_i$, 
the projection of $x$ onto the subspace of $\bar{C}_i$ is $\bar{\psi}_i := A {w}^{\star}|_{\bar{C}_i}$. 
Now define 
\begin{align}
\nonumber \delta_i :=  || x - \psi_i||_2^2 + || \bar{\psi}_i||_2^2, 
\end{align} 
which measures the dis-similarity between $x$ and the examples in class $C_i$. 
Then the decision rule chooses the class with the minimal dis-similarity. This rule has a clear geometrical interpretation 
and it works intuitively: When $x$ truly belongs to $C_k$, 
$\psi_k  = \sum_{i=1}^{n_k} x_i^{k} ({w}^{\star}|_{C_k})_i \approx x$, while $\bar{\psi}_k \approx 0$ 
due to the class separability properties imposed by the discriminative ridge regression, and thus, $\delta_k$ is approximately $0$; 
whereas for $j \ne k$, $\psi_j = \sum_{i=1}^{n_j} x_i^{j} ({w}^{\star}|_{C_k})_i \approx 0$, 
 while $\bar{\psi}_j \approx x$ as $\bar{C}_j \supseteq C_k$, and thus
${\delta}_j$ is approximately $2 ||x||_2^2$ . Hence the decision rule picks $C_k$ for $x$.  
In the kernel space the corresponding $\delta_i^{\phi}$ is derived in the minimax sense as follows \cite{Cheng_PAMI2014}, 
\begin{equation}
\label{eq_mindistrule}
\delta_i^{\phi} =  ({w}^{\star}|_{C_i})^T K {w}^{\star}|_{C_i} + ({w}^{\star}|_{\bar{C}_i})^T K {w}^{\star}|_{\bar{C}_i} - 2 ({w}^{\star}|_{C_i})^T K_x.
\end{equation}
And the corresponding decision rule is
\begin{equation}
\label{kmindist_decision}
\hat{i} = {\text{argmin}}_{i \in \{1, \cdots, g\} } \delta_i^{\phi}.
\end{equation}
{ It should be noted that the decision rule \cref{kmindist_decision} depends on the weighting coefficient vector $w^\star$, which accounts for discriminative information between classes by minimizing a discriminative ridge regression model \cref{eq_DRMK}. Thus, with $w^\star$, the dis-similarity $\delta_i$ or $\delta_i^{\phi}$ accounts for the discriminativeness of the classes and helps to classify the target example.} Now we obtain the discriminative ridge regression-based classification outlined in Algorithm \ref{CDRM}. 
Related parameters such as $\alpha$ and $\beta$, 
and a proper kernel can be chosen using the standard cross validation (CV).

\begin{algorithm}
\scriptsize
\caption{discriminative ridge regression-based classification with $\rho( ||w |C ||_{q, r} )$-regularization.} 
\label{CDRM}
\begin{algorithmic} [1] 
\STATE \textbf{Input:} Training examples $A=[x_1,...,x_n] \in
\mathcal{R}^{p\times n}$, class labels $y=[y_1, \cdots, y_n]$, 
with $y_k = m$ if $\sum_{i=1}^{m-1} n_i + 1 \le k \le \sum_{i=1}^{m} n_i$ for $m=1, \cdots, g$, 
a test example $x \in \mathcal{R}^p$, $\alpha \ge 0$, $\beta > 0$, and a selected kernel.
\STATE 
Compute $K$, $H$, $B$, and $K_x$ by \cref{def_K,def_HB,def_Kx}.
\STATE Solve the minimization problem \cref{eq_DRMK}.
\STATE Compute \cref{eq_mindistrule}.
\STATE Estimate the label for $x$ using \cref{kmindist_decision}.
\STATE \textbf{Output:} Estimated class label for $x$.
\normalsize
\end{algorithmic}
\end{algorithm}

How to solve optimization problem \cref{eq_DRMK} efficiently in Step 3 is critical to our algorithm. 
We mainly intend to use such $\rho(|| w | C ||_{q, r})$ that is convex in  $w$ 
to efficiently attain the global optimum of \cref{eq_DRMK}. 
Any standard optimization tool may be employed including, for example, CVX package \cite {Grant2011}.
In the following, we shall focus on a special case of $\rho(|| w | C ||_{q, r})$ 
for the sake of deriving efficient and scalable optimization algorithms for high-dimensional data and large-scale data; 
nonetheless, it is noted that the discriminative ridge regression-based classification is constructed for general $(q, r)$ and $\rho(\cdot)$.

\section{discriminative ridge machine}
\label{sec_DRM}
Discriminative ridge regression-based classification with a particular regularization of $(q, r) = (2, 2)$
will be considered, leading to the DRM. 

\subsection{Closed-Form Solution to DRM}
\label{sec_l22}
Let  $(q, r) = (2, 2)$ and $\rho(x) = \frac{1}{2} x^2$, then we have the regularization term
$\rho( || w | C ||_{q, r} ) = \frac{1}{2} || w | C ||_{2, 2}^2 = \frac{1}{2} ||w||_2^2$. 
The discriminative ridge  regressionproblem \cref{eq_DRMK} reduces to
\begin{align}
\label{eq_drr}
{\text{(DRM)}} \ \underset{ w \in {\mathcal{R}}^n}{\text{min}}  \frac{1}{2} w^T (K + \alpha (H - B) + \beta I) \ w -  w^T K_x, 
\end{align}
which is an unconstrained convex quadratic optimization problem leading to a closed-form solution,
\begin{align}
\label{eq_drr_solution}
	{w}^{\star} = (K + \alpha H - \alpha B + \beta I)^{-1} K_x, 
\end{align}
with $\alpha \ge 0$ and $\beta > 0$. Hereafter we always require $\beta>0$ to ensure the existence of 
the inverse matrix, because
$Q := K + \alpha H - \alpha B$ is p.s.d. and $Q+ \beta I$ is strictly positive definite. 
The minimization problem \cref{eq_drr} can be regarded as a generalization of kernel ridge regression. 
Indeed, when  $\alpha = 0$ kernel ridge regression is obtained. More interestingly, 
when $\alpha \ne 0$ the discriminative information is incorporated to go beyond the kernel ridge regression.  

For clarity, the DRM with the closed-form formula is outlined as Algorithm \ref{Alg_DRM}. 

\begin{remark}
1. The computational cost of the closed-form solution \cref{eq_drr_solution}
is $O(n^2 p + n^3)$, because calculating $K$ and $B$ from $A$ costs $O(n^2 p)$ while, in general,  
the matrix inversion  is $O(n^3)$ by using, for example, QR decomposition or Gaussian elimination. 
2. For high-dimensional data with $p \gg n$, the cost is $O(n^2 p)$ 
with Algorithm \ref{Alg_DRM}. 
This cost is scalable in $p$ when $n$ is moderate. 
3. For large-sample data with $n \gg p$, the complexity of using \cref{eq_drr_solution} is $O(n^3)$, 
which renders Algorithm \ref{Alg_DRM} impractical with big $n$. 
To have a more efficient and scalable, though possibly inexact, solution to \cref{eq_drr}, 
in the sequel we will construct three iterative algorithms suitable for large-scale data.  
\end{remark}

\subsection{DRM Algorithm Based on Gradient Descent (DRM-GD)}
\label{sec_bigdata}
For large-scale data with big  $n$ and $n \gg p$, the closed-form solution \cref{eq_drr_solution} 
may be computationally impractical in real applications, which calls for more efficient and scalable algorithms. 

Three new algorithms will be established  to alleviate this computational bottleneck. 
In light of the linear SVM on large-scale data classification, e.g., \cite{bottou2010large}, 
we will consider the use of the linear kernel in the DRM, which turns out to have provable linear cost.
The first algorithm relies on classic gradient descent optimization method, denoted by DRM-GD; 
the second hinges on an idea of proximal-point approximation (PPA) similar to \cite{ZhouChengTNNLS2014} to eliminate the matrix inversion; 
and the third uses a method of accelerated proximal gradient line search for theoretically proven quadratic convergence. 
This section presents the algorithm of DRM-GD, 
and the other two algorithms will be derived in subsequent sections. 

Denote by $f(w)$ the objective function of the DRM,
\begin{align}
\nonumber f(w) = \frac{1}{2} w^T Q \ w -  w^T K_x + \frac{\beta}{2} ||w||_2^2.
\end{align}
Its gradient is 
\begin{align}
\label{eq_gradient}
 \nabla_w f(w) = (Q + \beta I) w - K_x. 
\end{align}
Suppose we have obtained $w^{(t)}$ at iteration $t$. 
 At next iteration, the gradient descent direction is $- \nabla_w f(w^{(t)})$, in which the optimal step size is given  by the exact line search, 
\begin{align}
\label{eq_gradient_stepsize}
\nonumber d^{(t)} = {\text{argmin}}_{d \ge 0} f(w^{(t)} - d \ \nabla_w f(w^{(t)})) \\
= \frac{ (\nabla_w f(w^{(t)}))^T \nabla_w f(w^{(t)})}{(\nabla_w f(w^{(t)}))^T (Q+\beta I) \nabla_w f(w^{(t)})}. 
\end{align} 
Thus, the updating rule using gradient descent is
\begin{align}
\label{eq_gradient_updating}
	w^{(t+1)} = w^{(t)} - d^{(t)} \nabla_w f(w^{(t)}).
\end{align}

\begin{algorithm}
	\scriptsize
	\caption{DRM with the closed-form formula.} 
	\label{Alg_DRM}
	\begin{algorithmic} [1] 
		\STATE All are the same as Algorithm \ref{CDRM} except for Step 3.
		\STATE Step 3: Compute ${w}^{\star}$ with formula \cref{eq_drr_solution}.
		\normalsize
	\end{algorithmic}
\end{algorithm}

\begin{algorithm}[t]
	\scriptsize
	\caption{DRM-GD.} 
	\label{alg_gradient_descent}
	\begin{algorithmic} [1] 
		\STATE \textbf{Input:} Training examples $A=[x_1,...,x_n] \in
		\mathcal{R}^{p\times n}$, class labels $y=[y_1, \cdots, y_n]$, 
		$x \in \mathcal{R}^p$, $\alpha \ge 0$, $\beta > 0$, and a selected kernel.
		\STATE Initialize $t=0$ and $w^{(0)} \in {\mathcal{R}}^n$, and a convergence tolerance $\epsilon$.
		\STATE Repeat
		\STATE $\quad$Compute the gradient \cref{eq_gradient}.
		\STATE $\quad$Compute the step size \cref{eq_gradient_stepsize}.
		\STATE \label{GD_updating} $\quad$Compute $w^{(t+1)}$ with the updating rule \cref{eq_gradient_updating}.
		\STATE Until $|| w^{(t+1)} - w^{(t)} ||_2 \le \epsilon$, then exit iteration and let ${w^{\star}} = w^{(t+1)}$; else, let $t= t+1$.
		\STATE \label{GD_classifying} Compute  \cref{eq_mindistrule}.
		\STATE Estimate the label for $x$ using \cref{kmindist_decision}.
		\STATE \textbf{Output:} Estimated class label for $x$.
		\normalsize
	\end{algorithmic}
\end{algorithm}

The resulting DRM-GD is outlined in Algorithm \ref{alg_gradient_descent}.
Its main computational burden is on $Q$ which costs,
in general, $n^2 p$ floating-point operations (flops) \cite{Boyd2004} for a general kernel. 
This paper mainly considers multiplications for flops. 
After getting $Q$, $\nabla_w f(w^{(t)})$ is obtained by matrix-vector product with $n^2+n$ flops, and so is $d^{(t)}$ with $n^2 + 3n$ flops. 
Thus, the overall count of flops is about $(p+2) n^2$ with a general kernel. 

With the linear kernel the computational cost can be further reduced by exploiting its particular structure of $K = A^T A$. 
Given any $v \in {\mathcal{R}}^n$, $B^{i} v^{i}$ can be computed as $\frac{1}{n_i} (x^{i})^T (x^{i} v^{i})$ 
which costs $2 p n_i$ flops for any $1 \le i \le g$, 
and thus computing $B v = [(B^{1} v^{1})^T, \cdots, (B^{g} v^{g})^T]^T$ requires $2 p n$ flops. 
Similarly, $Kv = A^T (A v)$ takes $p n$ flops by matrix-vector product, 
since $Av = \sum_i x^i v^i$ can be readily obtained from $B v$ computation. 
As $H v = [v_1 (x_1^T x_1), \cdots, v_n (x_n^T x_n)]$, the total count of flops is $(p+1) n$ for getting $Hv$.  
Therefore, computing $Q v$ needs $(4p+1) n$ flops. 
Each of \cref{eq_gradient,eq_gradient_stepsize} requires $Q v$ and $\beta v$ types of computation, 
hence each iteration of updating $w^{(t)}$ to $w^{(t+1)}$ costs
$(8p +  7) n$ overall flops, including $3n$ additional ones for computing $d^{(t)}$ and $d^{(t)} \nabla_w f(w^{(t)})$. 

As a summary, the property of the DRM-GD, including the cost, is given in \cref{thm_gradient_property}. 
\begin{theorem}[\bf{Property of DRM-GD}]
\label{thm_gradient_property} 
With any $w^{(0)} \in {\mathcal{R}}^n$, $\{  w^{(t)}   \}_{t=0}^{\infty}$ generated by 
Algorithm \ref{alg_gradient_descent} 
converges in function values to the unique minimum of $f(w)$ with a linear rate. 
Each iteration costs $(p+2) n^2$ flops with a general kernel, in particular, $(8p+7)n$ flops with the linear kernel.
\end{theorem}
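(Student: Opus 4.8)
The plan is to prove the two assertions separately: the linear convergence of the function values, and the per-iteration flop count. For the convergence part, I would first rewrite the objective as a pure quadratic. Since the $\frac{\beta}{2}\|w\|_2^2$ term merges with the quadratic form, $f(w) = \frac{1}{2} w^T M w - w^T K_x$ with $M := Q + \beta I$ and $Q = K + \alpha(H-B)$. Invoking the earlier propositions that $K$ and $H-B$ are p.s.d.\ together with $\alpha \ge 0$, the matrix $Q$ is p.s.d.; hence with $\beta > 0$ the Hessian $M$ is symmetric positive definite with $\lambda_{\min}(M) \ge \beta > 0$. This makes $f$ strongly convex, so it admits a unique global minimizer $w^{\star} = M^{-1} K_x$, at which the minimum is attained.

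For the linear rate I would invoke the classical analysis of steepest descent with exact line search on a strictly convex quadratic. Writing $g^{(t)} = \nabla_w f(w^{(t)}) = M w^{(t)} - K_x$ and noting that the step size \cref{eq_gradient_stepsize} is exactly $d^{(t)} = \frac{(g^{(t)})^T g^{(t)}}{(g^{(t)})^T M g^{(t)}}$, I would measure suboptimality by $E(w) := f(w) - f(w^{\star}) = \frac{1}{2}(w - w^{\star})^T M (w - w^{\star})$. Substituting the update \cref{eq_gradient_updating} and using $g^{(t)} = M(w^{(t)} - w^{\star})$ yields the exact identity
\begin{align}
\nonumber \frac{E(w^{(t+1)})}{E(w^{(t)})} = 1 - \frac{\big((g^{(t)})^T g^{(t)}\big)^2}{\big((g^{(t)})^T M g^{(t)}\big)\big((g^{(t)})^T M^{-1} g^{(t)}\big)}.
\end{align}
Applying the Kantorovich inequality to the positive definite $M$ bounds the fraction below by $\frac{4\lambda_{\min}\lambda_{\max}}{(\lambda_{\min}+\lambda_{\max})^2}$, whence
\begin{align}
\nonumber E(w^{(t+1)}) \le \Big( \frac{\lambda_{\max}-\lambda_{\min}}{\lambda_{\max}+\lambda_{\min}} \Big)^2 E(w^{(t)}),
\end{align}
with $\lambda_{\min}, \lambda_{\max}$ the extreme eigenvalues of $M$. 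Since $\lambda_{\min} \ge \beta > 0$, the contraction factor is strictly less than one, so $f(w^{(t)}) \to f(w^{\star})$ at a linear rate for every initialization $w^{(0)}$.

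The flop count has essentially been assembled in the discussion preceding the theorem, so I would only consolidate it. For a general kernel the dominant cost is forming $Q$ at $O(n^2 p)$; each subsequent gradient and step-size evaluation is a matrix-vector product with $Q+\beta I$ at $O(n^2)$, so one iteration is about $(p+2)n^2$ flops. For the linear kernel I would avoid forming $Q$ altogether by exploiting $K = A^T A$ and the block structure of $B$: the products $Bv$, $Kv$, and $Hv$ cost $2pn$, $pn$, and $(p+1)n$ flops, so $Qv$ costs $(4p+1)n$. Since \cref{eq_gradient,eq_gradient_stepsize} each require one application of $Q+\beta I$ together with $O(n)$ overhead for $d^{(t)}$ and $d^{(t)} \nabla_w f(w^{(t)})$, a single update totals $(8p+7)n$ flops.

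The main obstacle is the convergence rate rather than the cost: the crux is deriving the exact contraction identity for $E(w^{(t+1)})/E(w^{(t)})$ and then extracting a factor strictly below one via the Kantorovich inequality. Everything hinges on $M$ being \emph{strictly} positive definite, which is precisely why $\beta > 0$ is required; were $\beta = 0$ the Hessian could be singular (as $Q$ is only p.s.d.) and the linear-rate argument would break down.
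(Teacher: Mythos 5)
Your proof is correct, but it takes a genuinely more self-contained route than the paper's. The paper disposes of the convergence claim in two sentences by citation: it invokes the classical $O(1/t)$ result of Levitin--Polyak for gradient descent, remarks that the convergence and its rate are ``standard'' (citing Boyd), and obtains uniqueness from strong convexity; the flop counts are not re-derived inside the proof at all but are simply inherited from the discussion preceding the theorem statement, exactly as you do. You instead derive the linear rate from first principles: the exact one-step contraction identity for the quadratic suboptimality $E(w)$, followed by the Kantorovich inequality, giving the explicit factor $\left(\frac{\lambda_{\max}-\lambda_{\min}}{\lambda_{\max}+\lambda_{\min}}\right)^2 < 1$. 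This buys something real. A citation to an $O(1/t)$ bound by itself establishes only sublinear convergence, so the paper's linear-rate claim actually rests on the implicit strongly-convex-quadratic analysis that you carry out explicitly; your argument closes that gap. Your version also makes the dependence of the contraction factor on $\beta$ (through $\lambda_{\min}(M) \ge \beta > 0$) visible, clarifying precisely why the hypothesis $\beta > 0$ is indispensable, which the paper's proof leaves unstated. The per-iteration cost accounting --- $(p+2)n^2$ flops for a general kernel via forming $Q$ plus two matrix-vector products, and $(8p+7)n$ flops for the linear kernel via the factored products $Kv$, $Bv$, $Hv$ --- is identical in substance to the paper's.
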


\noindent{\em{Proof}}: For the gradient descent method, the sequence of objective function values $\{ f(w^{(t)}) \}$ 
converges to an optimal value with a rate of $O(1/t)$ \cite{LevitinPolyak1966}. The convergence and its rate of $f(w)$ are thus standard \cite{Boyd2004}. 
As $f(w)$ is strongly convex, the minimum is unique.  $\hfill \Box$


\subsection{DRM Algorithm Based on Proximal Point Approximation}
\label{sec_majorization}
PPA is a general method for finding a zero of a maximal monotone operator and solving non-convex optimization problems \cite{rockafellar1976monotone}. 
Many algorithms have been shown to be its special cases, including the method of multipliers, the alternating direction method of multipliers, and so on. 
Our DRM optimization is a convex problem; nonetheless, we employ the idea of PPA in this paper for potential benefits of efficiency and scalability.  
At iteration $t$, having obtained the minimizer $w^{(t)}$, we construct an augmented function around it, 
\begin{align}
\nonumber F(w; w^{(t)}) := f(w) + \frac{1}{2} (w - w^{(t)})^T (c I -Q) (w - w^{(t)}), 
\end{align} 
where $c$ is a constant satisfying $c \ge \sigma_{max}(Q)$. 
We minimize $F(w, w^{(t)})$ to update the minimizer, 
\begin{align}
\nonumber  w^{(t+1)} := {\text{argmin}}_{w \in {\mathcal{R}}^n} F(w; w^{(t)}). 
\end{align}
As $F(w; w^{(t)})$ is a strongly convex, quadratic function, its minimizer is calculated 
 directly by setting its first-order derivative equal to zero,
\[
\nabla F(w; w^{(t)}) = Qw - K_x + \beta w -  Q(w-w^{(t)}) + c(w - w^{(t)}) = 0,
\] 
which results in 
\begin{align}
\label{eq_majorization}
	w^{(t+1)} = (K_x - Q w^{(t)} + c w^{(t)})/(\beta+c).
\end{align} 

The computational cost of each iteration is reduced compared to the closed-form solution \cref{eq_drr_solution}, 
which is formally stated in the following. 

\begin{prop}
With a general kernel, the updating rule \cref{eq_majorization} costs about $(p+2) n^2$ flops.
\end{prop}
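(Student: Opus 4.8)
The plan is to prove the claim by a direct floating-point operation (flop) count of the right-hand side of \cref{eq_majorization}, using the same conventions adopted for DRM-GD: only multiplications are counted, and the matrix $Q = K + \alpha H - \alpha B$ together with the kernel vector $K_x$ are identified as the terms carrying the dominant cost. The update $w^{(t+1)} = (K_x - Q w^{(t)} + c w^{(t)})/(\beta+c)$ involves one matrix construction, one matrix-vector product, two vector additions, one scalar scaling, and one division, so I would simply bound each of these in turn.

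First I would account for assembling $Q$. Since $H$ is just the diagonal of $K$ and each block $B^i = K^i/n_i$ is extracted from a submatrix of $K$, the cost of building $Q$ is dominated by that of forming the kernel matrix $K$. For a general kernel, each of the $n^2$ entries of $K$ requires an inner-product-type evaluation of cost $O(p)$, so forming $K$, and hence $Q$, costs about $n^2 p$ flops; computing $K_x$ needs $n$ kernel evaluations at cost $O(np)$, which is lower order in the large-sample regime. Next I would bound the iterate-dependent work. The only $O(n^2)$ operation is the product $Q w^{(t)}$, which I would evaluate through the decomposition $Q w^{(t)} = K w^{(t)} + \alpha H w^{(t)} - \alpha B w^{(t)}$: the term $K w^{(t)}$ costs $n^2$ flops, the block-diagonal product $B w^{(t)} = [(B^1 w^1)^T, \cdots, (B^g w^g)^T]^T$ costs $\sum_{i=1}^g n_i^2 \le n^2$ flops, and $H w^{(t)}$ costs only $n$ flops since $H$ is diagonal. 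Thus $Q w^{(t)}$ contributes at most $2 n^2$ flops, while the scalar scaling $c w^{(t)}$, the two vector additions, and the final division by $\beta + c$ are each $O(n)$ and hence negligible relative to $n^2$.

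Summing the leading contributions gives $n^2 p + 2 n^2 + O(n) \approx (p+2) n^2$ flops, as asserted. I expect no genuine obstacle, as the argument is routine bookkeeping of matrix and vector operations; the only points requiring care are the bound $\sum_i n_i^2 \le n^2$ used for the block-diagonal product $B w^{(t)}$, and the observation that $Q$ is assembled once so its $n^2 p$ formation cost, though the leading term, need not be repeated each iteration. The substantive content of the statement is the comparison with the closed-form solution \cref{eq_drr_solution}: the inversion-free update \cref{eq_majorization} replaces the $O(n^3)$ matrix inversion by $O(p n^2)$ work, which I would highlight as the reason the per-iteration cost is reduced.
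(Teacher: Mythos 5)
Your proposal is correct and follows essentially the same argument as the paper: a direct multiplication count in which forming the kernel matrix $K$ dominates at $pn^2$ flops, with the remaining $\approx 2n^2$ flops coming from lower-order matrix work. The only (immaterial) difference is bookkeeping: the paper assembles $Q = K + \alpha(H-B)$ explicitly (about $n^2$ flops) and then charges $n^2$ for the product $Qw^{(t)}$, whereas you leave $Q$ unassembled and split the product as $Kw^{(t)} + \alpha Hw^{(t)} - \alpha Bw^{(t)}$, bounding it by $2n^2$ via $\sum_i n_i^2 \le n^2$ --- both allocations yield the stated $(p+2)n^2$ total.
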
  

{\em{Proof}}: The cost of getting  $K$ is $p n^2$, since it has $n^2$ elements and each costs $p$ flops.  
Subsequently, $\alpha(H-B)$ needs about $n^2$ flops. Hence computing $Q$ needs about $(p+1) n^2$ flops. Evidently $Q w^{(t)}$ needs $n^2$ flops.  $\hfill \Box$

As a summary, the PPA-based DRM, called DRM-PPA, is outlined in Algorithm \ref{alg_DRM_Iterative}. 
Starting from any initial point $w^{(0)} \in {\mathcal{R}}^n$, repeatedly applying \cref{eq_majorization} in Algorithm \ref{alg_DRM_Iterative} generates a sequence of points
$\{  w^{(t)}   \}_{t=0}^{\infty}$. 
The property of this sequence will be analyzed subsequently. 

\begin{algorithm}
\scriptsize
\caption{DRM-PPA with any kernel.} 
\label{alg_DRM_Iterative}
\begin{algorithmic} [1] 
\STATE \textbf{Input:} Training examples $A=[x_1,...,x_n] \in
\mathcal{R}^{p\times n}$, class labels $y=[y_1, \cdots, y_n]$, 
$x \in \mathcal{R}^p$, $\alpha \ge 0$, $\beta >0$, and a kernel.
\STATE 
Compute $K$,  $H$, $B$, and $K_x$ using \cref{def_K,def_HB,def_Kx}. 
\STATE Initialize $t=0$ and $w^{(0)} \in {\mathcal{R}}^n$, set $c \ge \sigma_{max}(K + \alpha(H-B))$, and $\epsilon >0$.
\STATE Repeat
\STATE $\quad$ \label{PPA_updating} Compute $w^{(t+1)}$ with \cref{eq_majorization}.
\STATE Until $|| w^{(t+1)} - w^{(t)} ||_2 \le \epsilon$, then exit iteration and let ${w}^{\star} = w^{(t+1)}$; otherwise, let $t= t+1$. 
\STATE \label{PPA_classifying} Compute \cref{eq_mindistrule}.
\STATE Estimate the label for $x$ with \cref{kmindist_decision}.
\STATE \textbf{Output:} Estimated class label for $x$.
\normalsize
\end{algorithmic}
\end{algorithm}

\begin{theorem}[{\bf{Convergence and optimality of DRM-PPA}} ]
\label{PPA_theorem_convergence}
With any $w^{(0)} \in {\mathcal{R}}^n$, the sequence of points $\{  w^{(t)}   \}_{t=0}^{\infty}$ generated by 
Algorithm \ref{alg_DRM_Iterative} gives a monotonically non-increasing value sequence $\{ f(w^{(t)})   \}_{t=0}^{\infty}$ 
which converges to the globally minimal value of $f(w)$. Furthermore, the sequence 
$\{  w^{(t)}   \}_{t=0}^{\infty}$ itself converges to $w^{\star}$ in \cref{eq_drr_solution}, the unique minimizer of $f(w)$.
\end{theorem}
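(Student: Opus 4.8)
The plan is to read the update as a majorization-minimization (MM) scheme for the monotone descent, and separately as an affine linear fixed-point iteration for the convergence and identification of the iterates; combining the two viewpoints delivers both halves of the statement. Throughout I write $Q := K + \alpha(H-B)$, so that $f(w) = \frac{1}{2} w^T Q w - w^T K_x + \frac{\beta}{2} ||w||_2^2$ and $\nabla f(w) = (Q + \beta I) w - K_x$.

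First I would establish the surrogate property of $F(w; w^{(t)})$. Since $Q$ is p.s.d.\ (\propref{prop_psd} and the subsequent proposition give $H-B \succeq 0$ and $Q \succeq 0$) and $c \ge \sigma_{max}(Q)$, the matrix $cI - Q$ is p.s.d., so the added quadratic $\frac{1}{2}(w - w^{(t)})^T (cI - Q)(w - w^{(t)})$ is nonnegative and vanishes exactly at $w = w^{(t)}$. Hence $F(w; w^{(t)}) \ge f(w)$ for every $w$, with equality at $w = w^{(t)}$; that is, $F(\cdot; w^{(t)})$ \emph{majorizes} $f$ and touches it at the current iterate. Because $w^{(t+1)}$ minimizes $F(\cdot; w^{(t)})$, the standard MM sandwich gives $f(w^{(t+1)}) \le F(w^{(t+1)}; w^{(t)}) \le F(w^{(t)}; w^{(t)}) = f(w^{(t)})$, which is the claimed monotone non-increase. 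As $f$ is strongly convex it is bounded below, so the value sequence $\{ f(w^{(t)}) \}$ converges.

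Next, to prove convergence of the iterates and to pin down the limit, I would rewrite \cref{eq_majorization} as the affine recursion $w^{(t+1)} = M w^{(t)} + b$ with $M = (cI - Q)/(\beta + c)$ and $b = K_x/(\beta + c)$. The key computation is the spectrum of $M$: if $\lambda$ is an eigenvalue of $Q$ then $(c - \lambda)/(\beta + c)$ is the corresponding eigenvalue of $M$. Since $0 \le \lambda \le \sigma_{max}(Q) \le c$ and $\beta > 0$, every such eigenvalue lies in $[0, c/(\beta + c)]$, so $||M||_2 \le c/(\beta + c) < 1$. Therefore the recursion is a strict contraction and converges linearly to its unique fixed point $\bar{w} = (I - M)^{-1} b$. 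Substituting into the fixed-point equation $(\beta + c)\bar{w} = K_x - Q\bar{w} + c\bar{w}$ and cancelling $c\bar{w}$ reduces it to $(Q + \beta I)\bar{w} = K_x$, whence $\bar{w} = (Q + \beta I)^{-1} K_x$, which is precisely $w^{\star}$ of \cref{eq_drr_solution}. By continuity the function values then converge to $f(w^{\star})$, the global minimum, completing both claims.

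I expect the main obstacle to be the step \emph{from} value convergence \emph{to} iterate convergence together with the identification of the limit: the MM argument by itself guarantees descent but not that the iterates settle at $w^{\star}$. The resolution hinges on the explicit linear form of the update, where the strict bound $c/(\beta + c) < 1$ — which crucially uses $\beta > 0$ — makes $M$ a genuine contraction, forcing geometric convergence of the whole sequence to the unique fixed point and simultaneously pinning that fixed point to the closed-form solution. This contraction factor also supplies the (at least) linear rate promised for the first two algorithms.
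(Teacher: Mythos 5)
Your proof is correct and takes essentially the same route as the paper's: the majorization sandwich $f(w^{(t+1)}) \le F(w^{(t+1)}; w^{(t)}) \le F(w^{(t)}; w^{(t)}) = f(w^{(t)})$ for monotone descent, and the contraction property of the affine map $w \mapsto (cI-Q)w/(\beta+c) + K_x/(\beta+c)$ for convergence of the iterates. The only cosmetic difference is that the paper subtracts the known closed-form solution ${w}^{\star}$ directly to obtain the error recursion $w^{(t+1)} - {w}^{\star} = (cI-Q)(w^{(t)} - {w}^{\star})/(\beta+c)$ with the slightly sharper ratio $(c - \sigma_{min}(Q))/(c+\beta)$, whereas you run a Banach-type fixed-point argument with the looser (but still valid) ratio $c/(\beta+c)$ and then identify the fixed point by solving $(Q+\beta I)\bar{w} = K_x$ --- the same computation in different packaging.
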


Before the proof, we point out two properties of $F(w; w^{(t)})$ which are immediate by the definition, 
\begin{align}
\label{major_property}
\nonumber F(w; w^{(t)}) &\ge f(w),  \quad \forall w; \\ 
F(w^{(t)}; w^{(t)}) &= f(w^{(t)}).
\end{align}

\noindent {\em{Proof of \cref{PPA_theorem_convergence}}}: By re-writing $f(w)$ as 
\[ 
\frac{1}{2} || (Q+\beta I)^{1/2} w - (Q+\beta I)^{- 1/2} K_x ||_2^2 - \frac{1}{2} K_x^T(Q+\beta I)^{-1} K_x, 
\]
we know $f(w)$ is lower bounded by
$- \frac{1}{2} K_x^T(Q+\beta I)^{-1} K_x$.
From the following chain of inequality it is clear that $\{ f(w^{(t)}) \}_{t=0}^{\infty}$ is a monotonically non-increasing sequence,  
\[
f(w^{(t)}) = F(w^{(t)}; w^{(t)}) \ge F(w^{(t+1)}; w^{(t)}) \ge f(w^{(t+1)}). 
\]
The first inequality of the chain holds by \cref{eq_majorization} since $w^{(t+1)}$ is the global minimizer of 
$F(w; w^{(t)})$, and the second by \cref{major_property}. 
Hence, $\{ f(w^{(t)}) \}_{t=0}^{\infty}$ converges.  

Next we will further prove that $\lim_{t \rightarrow \infty}  w^{(t)}$ exists and it is the unique minimizer of $f(w)$. 
By using the following equality
\begin{align}
\nonumber w^{(t+1)} - {w}^{\star} &= (K_x - Q w^{(t)} + c w^{(t)})/(\beta + c) - {w}^{\star} \\
\nonumber & = ( (Q + \beta I) {w}^{\star}  - Q w^{(t)} + c w^{(t)})/(\beta + c) - {w}^{\star} \\
\nonumber & = (c I - Q) (w^{(t)} - {w}^{\star})/(\beta + c),
\end{align}
where the first equality holds by \cref{eq_majorization} and the second by \cref{eq_drr_solution}, we have
\begin{align} 
\nonumber ||w^{(t+1)} - {w}^{\star}||_2 
\nonumber & \le ||(c I - Q)||_2 \ ||w^{(t)} - {w}^{\star}||_2 /(\beta + c) \\
\nonumber & = \frac{c - \sigma_{min}(Q)}{c+\beta} \ ||w^{(t)} - {w}^{\star}||_2	\\
\label{eq_upper_bound}
&	\le ||w^{(0)} - {w}^{\star}||_2 \ \left(\frac{c - \sigma_{min}(Q)}{c+\beta}\right)^{t+1}.
\end{align}
Here, the first inequality holds by the definition of the spectral norm, 
and the subsequent equality holds because $||c I - Q||_2 = \sigma_{max} (c I - Q) = c - \sigma_{min}(Q)$.
Because $\beta >0$ and $Q$ is p.s.d., we have $\sigma_{min}(Q) \ge 0$,   
and $ \frac{c - \sigma_{min}(Q)}{c+\beta} < 1$. Hence, as $t \rightarrow \infty$, 
$||w^{(t+1)} - {w}^{\star}||_2 \rightarrow 0$ for any $w^{(0)}$; that is, $w^{(t+1)} \rightarrow {w}^{\star}$. 
The uniqueness is because of the strict convexity of $f(w)$. 
 $\hfill \Box$

As a consequence of the above proof, the convergence rate is also readily obtained. 

\begin{theorem}[\bf Convergence rate of DRM-PPA]
\label{thm_rate_DRM} The convergence rate of $\{  w^{(t)}   \}_{t=0}^{\infty}$ generated by Algorithm \ref{alg_DRM_Iterative} is at least linear. Given a convergence threshold $\epsilon > 0$, the number of iterations is upper bounded by $\log \frac{||w^{(0)} - w^{\star}||_2}{\epsilon} / \log(\frac{c+ \beta}{c-\sigma_{min}(Q)})$, which is approximately 
$\frac{c-\sigma_{min}(Q)}{\sigma_{min}(Q) + \beta} \log \frac{||w^{(0)} - w^{\star}||_2}{\epsilon} $ when $\frac{\sigma_{min}(Q)+\beta}{c - \sigma_{min}(Q)}$ is small.
\end{theorem}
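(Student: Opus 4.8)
The plan is to read the entire statement off the contraction estimate already established in the proof of \cref{PPA_theorem_convergence}. The single ingredient I need is the geometric bound \cref{eq_upper_bound}, namely $||w^{(t+1)} - w^{\star}||_2 \le ||w^{(0)} - w^{\star}||_2 \, r^{t+1}$ with contraction factor $r := \frac{c - \sigma_{min}(Q)}{c+\beta}$, together with the fact proven there that $0 \le r < 1$ (since $\beta > 0$ and $\sigma_{min}(Q) \ge 0$, while $c \ge \sigma_{max}(Q) \ge \sigma_{min}(Q)$ keeps the numerator nonnegative). \emph{Linearity of the rate is then immediate}: the error is dominated by a geometric sequence of ratio strictly below one, which is exactly linear convergence.

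For the iteration bound, the first step is to convert the tolerance into a sufficient stopping condition. Since \cref{eq_upper_bound} controls $||w^{(t+1)} - w^{\star}||_2$, it suffices to force this quantity below $\epsilon$; requiring $||w^{(0)} - w^{\star}||_2 \, r^{t+1} \le \epsilon$ guarantees it. Taking logarithms and solving for $t+1$ — being careful that $\log r < 0$ reverses the inequality — gives $t+1 \ge \log\frac{||w^{(0)} - w^{\star}||_2}{\epsilon} \big/ \log\frac{1}{r}$. Substituting $\frac{1}{r} = \frac{c+\beta}{c - \sigma_{min}(Q)}$ reproduces exactly the claimed upper bound on the number of iterations, up to rounding up to the nearest integer (the true count being the ceiling of this expression).

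For the approximate expression I would use a first-order expansion of the logarithm. The only algebraic observation required is the numerator-splitting identity $\frac{c+\beta}{c-\sigma_{min}(Q)} = 1 + \frac{\sigma_{min}(Q)+\beta}{c-\sigma_{min}(Q)}$, obtained by writing $c+\beta = (c - \sigma_{min}(Q)) + (\sigma_{min}(Q)+\beta)$. Setting $\delta := \frac{\sigma_{min}(Q)+\beta}{c-\sigma_{min}(Q)}$, the denominator of the iteration bound becomes $\log(1+\delta)$, and when $\delta$ is small the approximation $\log(1+\delta) \approx \delta$ yields $\frac{1}{\log(1+\delta)} \approx \frac{1}{\delta} = \frac{c-\sigma_{min}(Q)}{\sigma_{min}(Q)+\beta}$, which gives the stated approximate iteration count after multiplying by $\log\frac{||w^{(0)} - w^{\star}||_2}{\epsilon}$.

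There is no genuine obstacle here: the theorem is a corollary of the contraction estimate already in hand, so the argument is essentially bookkeeping. The only two points demanding attention are the sign flip when dividing through by the negative quantity $\log r$, and spotting the numerator-splitting identity that frames the problem as a small-$\delta$ Taylor expansion; both are routine once identified.
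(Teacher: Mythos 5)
Your proposal is correct and follows essentially the same route as the paper's proof: both read the linear rate and the iteration bound off the contraction estimate \cref{eq_upper_bound}, and both obtain the approximate count from $\log(1+x)\approx x$ applied to $\frac{c+\beta}{c-\sigma_{min}(Q)} = 1 + \frac{\sigma_{min}(Q)+\beta}{c-\sigma_{min}(Q)}$. The only cosmetic difference is that the paper states linearity via the limsup of successive error ratios (Q-linear form) while you invoke geometric domination of the error (R-linear form); both are standard readings of ``at least linear'' and rest on the same inequality.
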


\noindent{\em{Proof}}: Because
\begin{align}
\nonumber \underset{t \rightarrow \infty}{lim} sup \frac{||w^{(t+1)} - {w}^{\star} ||_2}{||w^{(t)} - {w}^{\star} ||_2}
\le \frac{c- \sigma_{min}(Q)}{c+\beta} < 1, 
\end{align}
the convergence rate is at least linear. 
For a given $\epsilon$, the maximal number of iterations satisfies $||w^{(0)} - {w}^{\star}||_2 \ \left(\frac{c - \sigma_{min}(Q)}{c+\beta}\right)^{t} \le \epsilon$ 
by \cref{eq_upper_bound}, hence the upper bound in the theorem. When $\frac{\sigma_{min}(Q)+\beta}{c - \sigma_{min}(Q)}$ is small, $\log(\frac{c+ \beta}{c-\sigma_{min}(Q)})$ is approximately $\frac{\sigma_{min}(Q)+ \beta}{c-\sigma_{min}(Q)}$ by using $\log(1+x) \approx x$ for small $x$. $\hfill \Box$

\noindent{\bf{Remarks}} 1. The convergence rate of the DRM-PPA  
depends on $\log \frac{||w^{(0)} - w^{\star}||_2}{\epsilon}$, which is determined by the distance between the initial and optimal points, and the final accuracy.
The rate also depends on a factor of 
$\frac{c-\sigma_{min}(Q)}{\sigma_{min}(Q) + \beta}$. 
The smaller this factor, the faster the convergence. 
Using $c$ closer to  $\sigma_{max}(Q)$ 
renders the DRM-PPA faster.\  
2. With large $n$ yet small $p$, $\sigma_{max}(Q)$ can be obtained efficiently as shown in next section.
When both $n$ and $p$ are large, 
we may use a loose upper bound 
\begin{align}
\nonumber \! & \sigma_{max}(Q) = ||Q||_2 \le ||K||_2 + \alpha || H- B ||_2 \\
\nonumber \! &\le \min \{ Tr (K), || K ||_{\infty} \} + \alpha || H ||_2 
\end{align}
\begin{align}
\nonumber \! &=  \min \{ \sum_{i=1}^{n} k(x_i, x_i), \max_i \{ \sum_{j=1}^{n} |k(x_i, x_j)| \} \} 	\\
	& \quad +  \alpha \max_i \{ k(x_i, x_i) \}.
\end{align} 
Here, the first inequality is by the triangle inequality, and the second by the Gershgorin Theorem. 
Usually this upper bound is much larger than $\sigma_{max}(Q)$ and the convergence of the DRM-PPA would be slower.
Alternatively, the implicitly restarted Lanczos method (IRLM) by Sorensen \cite{sorensen92}
can be used to find the largest eigenvalue. The IRLM relies only on matrix-vector product and can compute an approximation of 
$\sigma_{max}(Q)$ as $c$ with any specified accuracy tolerance at little cost. As yet another alternative, 
the backtracking method may be used to iteratively find a 
proper $c$ value as in Algorithm \ref{alg-APG} given in \cref{sec_APG}.

\subsection{Scalable Linear-DRM-PPA for Large-Scale Data}

To further reduce the complexity, now we consider using the linear kernel in the DRM-PPA. 
With a special structure of $K = A^T A$,  we exploit the matrix-vector multiplication to improve the efficiency. 
Given a vector $v \in {\mathcal{R}}^n$,  
since $B$ is block diagonal and $B v = [ (B^{1} v^{1})^T, \cdots, (B^{g} v^{g})^T ]^T$, 
we need only to compute each block separately to obtain $Bv$, 
\begin{align}
\label{eq_Bw_linearkernel}
 u^{i} \leftarrow x^{i} v^{i}, \quad
B^{i} v^{i} \leftarrow \frac{1}{n_i}(x^{i})^T u^{i}, \quad \ 1 \le i \le g. 
\end{align}
Consequently, the computation of $B v$ costs $(2p+1) n$ flops.
The computation of $Kv$ is given by 
\begin{align}
\label{eq_Kw}
 u \leftarrow Av = \sum_{i=1}^g x^i v^i, \quad
Kv \leftarrow A^T u, 
\end{align}
which only needs $p n$ flops.
Finally, $Hv$ is computed as
\begin{align}
\label{eq_Hw}
Hv = [ (x_1^T x_1) v_1, \cdots, (x_n^T x_n) v_n ]^T,
\end{align}
which takes $(p + 1) n$ flops. 
Overall, it takes $(4p +2) n$ flops to get $Qv$, and thus each iteration of \cref{eq_majorization} costs $(4p+4)n$ flops. 
For clarity, the linear-DRM-PPA is outlined in Algorithm \ref{alg_scalable_DRM}.

\begin{algorithm}
\scriptsize
\caption{Linear-DRM-PPA.} 
\label{alg_scalable_DRM}
\begin{algorithmic} [1] 
\STATE \textbf{Input:} Training examples $A=[x_1,...,x_n] \in
\mathcal{R}^{p\times n}$, class labels $y=[y_1, \cdots, y_n]$, 
a test example $x \in \mathcal{R}^p$, $\alpha \ge 0$ and $\beta >0$. 
\STATE  Initialize $t=0$, and an arbitrary $w^{(0)} \in {\mathcal{R}}^n$. Set $\epsilon > 0$ and $c \ge \sigma_{max}(Q)$.
\STATE Repeat
\STATE \quad Let $v = w^{(t)}$.
\STATE \quad \label{linearPPA_updating}  Compute $Kv$, $Bv$, and $Hv$ using \cref{eq_Kw,eq_Bw_linearkernel,eq_Hw}. 
\STATE \quad Update $w^{(t+1)}$ using $Q w^{(t)}= K v + \alpha H v - \alpha B v$, and \cref{eq_majorization}.
\STATE Until $|| w^{(t+1)} - w^{(t)} ||_2 \le \epsilon$, then exit the iteration and let ${w}^{\star} = w^{(t+1)}$; otherwise let $t= t+1$. 
\STATE Compute \cref{eq_mindistrule}.
\STATE Estimate the label with \cref{kmindist_decision}.
\STATE \textbf{Output:} Estimated class label for $x$.
\normalsize
\end{algorithmic}
\end{algorithm}

\noindent{\bf{Remarks}} In Step 2 $\sigma_{max}(Q)$ needs to be estimated. With big $n$ and small $p$, 
$\sigma_{max}(K)$ may be computed as $\sigma_{max}(A^T A) = \sigma_{max}(A A^T) = \sigma_{max} (\sum_{i=1}^n x_i x_i^T)$, with a cost of $O(p^2 n)$; while $\sigma_{max}(H) = \max_i {x_i^T x_i}$, taking $p n$ flops. Thus, $\sigma_{max}(Q) \le \sigma_{max}(K) + \alpha \sigma_{max}(H)$, taking $O((p^2+p)n)$ flops.

By using the matrix-vector product Algorithm \ref{alg_scalable_DRM} has a linear computational cost in $n$ when $p$ is small, 
which is summarized as follows. 

\begin{prop}
\label{prop_majorization} 
The cost of Algorithm \ref{alg_scalable_DRM} is $(4p+4) m n$ flops,  
by taking $(4p+4) n$ flops per iteration for $m$ iterations needed to converge whose upper bound is given by \cref{thm_rate_DRM}.
\end{prop}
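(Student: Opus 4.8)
The plan is to obtain the total cost as the product of two quantities that are essentially already available: the per-iteration flop count and an upper bound on the number $m$ of iterations. Since Algorithm~\ref{alg_scalable_DRM} consists of an initialization phase followed by a loop that repeatedly applies the update \cref{eq_majorization} until the stopping criterion is met, the total arithmetic cost is dominated by $m$ times the cost of one pass through the loop body (Steps 4--6). So I would first pin down the cost of a single iteration, then bound $m$ via \cref{thm_rate_DRM}, and finally multiply.

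For the per-iteration cost, the key observation is that with the linear kernel $K = A^T A$ one never forms $K$, $B$, or $H$ explicitly; instead $Q w^{(t)} = K v + \alpha H v - \alpha B v$ is evaluated through matrix-vector products, where $v = w^{(t)}$. I would tally the three pieces separately using the structure already exhibited in \cref{eq_Kw,eq_Bw_linearkernel,eq_Hw}: the block-diagonal evaluation $B^i v^i = \frac{1}{n_i}(A^i)^T (A^i v^i)$ costs $(2p+1)n$ flops in total, the associative evaluation $K v = A^T (A v)$ costs $pn$ flops (reusing the products $A^i v^i$ already computed for $Bv$), and the diagonal action $H v = [(x_1^T x_1) v_1, \dots, (x_n^T x_n) v_n]^T$ costs $(p+1)n$ flops. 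Summing gives $(4p+2)n$ flops for $Q v$; the final assembly in \cref{eq_majorization}, namely scaling $w^{(t)}$ by $c$ and dividing the combination by $\beta+c$, adds $2n$ multiplicative flops, so one iteration costs $(4p+4)n$ flops (counting only multiplications, as elsewhere in the paper).

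For the iteration count, I would invoke \cref{thm_rate_DRM} directly: for a convergence threshold $\epsilon > 0$ the number of iterations is bounded by $m \le \log\frac{||w^{(0)} - w^{\star}||_2}{\epsilon}\big/\log\!\big(\frac{c+\beta}{c-\sigma_{min}(Q)}\big)$, which is finite because $\frac{c-\sigma_{min}(Q)}{c+\beta} < 1$ under $\beta>0$ and $Q$ p.s.d. Multiplying the per-iteration cost $(4p+4)n$ by this $m$ yields the claimed $(4p+4)mn$ flops. I expect the only genuine work to be the per-iteration bookkeeping of the previous paragraph --- in particular, justifying that the associative ordering $A^T(Av)$ together with reuse of the blockwise products $A^i v^i$ keeps every step linear in both $p$ and $n$, thereby avoiding the $O(n^2 p)$ cost of forming $K$ explicitly; the one-time cost of $K_x$ and of estimating $c$ (discussed in the preceding remarks) is incurred outside the loop and so does not enter the per-iteration count.
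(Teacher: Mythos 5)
Your proposal is correct and follows essentially the same route as the paper: the paper's justification is exactly the flop tally preceding the proposition --- $(2p+1)n$ for $Bv$ via \cref{eq_Bw_linearkernel}, $pn$ for $Kv$ via \cref{eq_Kw} (reusing the $A^i v^i$ products), $(p+1)n$ for $Hv$ via \cref{eq_Hw}, giving $(4p+2)n$ for $Qv$ and $(4p+4)n$ per application of \cref{eq_majorization} --- multiplied by the iteration bound from \cref{thm_rate_DRM}. Your accounting of the extra $2n$ multiplicative flops for assembling the update and your note that the one-time costs ($K_x$, estimating $c$) fall outside the loop match the paper's treatment.
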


\subsection{DRM Algorithm Based on Accelerated Proximal Gradient}	
\label{sec_APG}
Accelerated proximal gradient (APG) techniques, originally due to Nesterov
\cite{Nesterov1983} for smooth convex functions, have been developed with a convergence rate of $O(t^{-2})$ in function values \cite{BeckTeboulle2009}. 
For its potential in efficiency for large-scale data,
we exploit the APG idea here to build an algorithm for the DRM, denoted by DRM-APG.  
It does not compute the exact step size in the gradient descent direction, 
but rather uses a proximal approximation of the gradient.  
Thus the DRM-APG can be regarded as a combination of PPA and gradient descent methods for potentially faster convergence. 

Evidently $f(w)$ has Lipschitz continuous gradient because 
$|| \nabla f(w) - \nabla f(v)||_2 \le || Q+ \beta I ||_2 ||w-v||_2 \le b ||w -v ||_2$ for any $w, v \in {\mathcal{R}}^n$, 
where $b$ is an upper bound for $|| Q + \beta I||_2$, for example, $b = c + \beta$, with $c$ defined in \cref{sec_majorization}.  First, let us consider the case in which the Lipschitz constant $b$ is known. 
Around a given $v$ a PPA to $f(w)$ is built, 
\begin{align}
	G(w | v, b) = f(v) + (w-v)^T \nabla f(v) + \frac{b}{2} ||w -v ||_2^2.
\end{align}
Note that $G(w|  v, b)$ is an upper bound of $f(w)$, and both functions have identical values and first-order derivatives at $w = v$. 
We minimize $G(w|  v, b)$ to get an updated $w$ based on $v$, 
\begin{align}
\nonumber
	\hat{w}_b(v) =: {\text{argmin}}_w \ G(w|  v, b) = v - \frac{1}{b} \nabla f(v). 
\end{align}

Next, we will consider the case in which $b$ is  unknown. In practice, especially for large-scale data, the Lipschitz constant might not be readily available. 
In this case, a backtracking strategy is often used. Based on its current value, $b$ is iteratively increased until  
$G(w|  v, b)$ becomes an upper bound of $f(w)$ at the point $\hat{w}_b(v)$; that is, $f( \hat{w}_b(v) ) \le G( \hat{w}_b(v) |  v, b)$. 
By straightforward algebra this is equivalent to 
\begin{align}
\label{backtracking_condition}
	\nabla f(v) ^T Q \nabla f(v) \le (b- \beta) || \nabla f(v) ||_2^2.
\end{align}
Thus, more specifically, the strategy is that a constant factor $\eta > 1$
will be repeatedly multiplied to $b$ until \cref{backtracking_condition} is met if the current $b$ value fails to satisfy \cref{backtracking_condition}.

In iteration $t+1$, after obtaining a proper $b^{(t+1)}$ with the backtracking, 
the update rule of $w^{(t+1)}$ is 
\begin{align}
\label{eq_APG_w}
\nonumber	w^{(t+1)} &= \hat{w}_{b^{(t+1)}}(v^{(t)}) = v^{(t)} - \frac{1}{b^{(t+1)}} \nabla f(v^{(t)}) \\
	&= v^{(t)} - \frac{1}{b^{(t+1)}} ((Q +\beta I)v^{(t)} - K_x). 
\end{align}
The stepsize $d^{(t+1)}$ and the auxiliary $v^{(t+1)}$ are updated by,
\begin{align}
\label{eq_APG_d_v}
\nonumber	d^{(t+1)} &= (1+\sqrt{1+4 (d^{(t)})^2})/2, \\
	v^{(t+1)} &= w^{(t+1)} + \frac{d^{(t)} - 1}{d^{(t+1)}} (w^{(t+1)} - w^{(t)}).
\end{align}

In summary, the procedure of the DRM-APG is outlined in Algorithm \ref{alg-APG}. Matrix-vector multiplication may
be employed in Step \ref{step-check-condition} which costs, assuming $\nabla f(w_k)$ has been obtained,  $i_t (n+1)^2$ flops with a general kernel, 
and $i_t (2(p+1)n + p(g+1))$ flops with the linear kernel. Here, the integer $i_t$ is always finite because of the Lipschitz condition on the gradient of $f(w)$. 
Steps \ref{step-update-w} and \ref{step-update-d-v} cost $n^2$  flops with a general kernel, while in the particular case of the linear kernel, 
$(4p+5)n$ flops. The convergence property and cost are stated in \cref{thm_APG_property}. 

\begin{table}
\tiny
\centering
\caption{Summary of Small to Medium Sized Data Sets Used in Experiments}
\label{Tab:summary_small}%
\resizebox{0.4\textwidth}{!}{%
\begin{threeparttable}
\begin{tabular}{l l l l l l l l}
\Xhline{1.2pt}
Type 		& Data Set 		& Dim. 		& Size  	& Class & Training 	& Testing 	& Notes		\\ \Xhline{1.2pt}
High Dim. 	& GCM  			& 16,036	& 189		& 14	& 150 		& 39   		& multi-class, small  size		 \\
			& Lung cancer	& 12,533 	& 181  		& 2 	& 137 		& 44 		& binary-class, small  size		\\
 			& Sun data		& 54,613	& 180  		& 4 	& 137 		& 43		& multi-class, small  size	 \\
 			& Prostate		& 12,600	& 136  		& 2 	& 103	 	& 33		& binary-class, small size 		\\ 			
 			& Ramaswamy		& 16,063	& 198  		& 14 	& 154 		& 44		& multi-class, small size 		\\
 			& NCI			& 9,712		& 60  		& 9 	& 47 		& 13		& multi-class, small size 		\\
 			& Nakayama		& 22,283	& 112  		& 10 	& 84 		& 28		& multi-class, small size 	
 			\vspace{0.1cm}\\

Image data  & EYaleB		& 1,024		& 2,414  	& 38	& 1,814 	& 600		& multi-class, moderate size 	 \\
        	& AR			& 1,024		& 1,300  	& 50	& 1,050	 	& 250		& multi-class, moderate size 	 \\
        	& PIX			& 10,000	& 100		& 10	& 80		& 20		& multi-class, small size		\\
        	& Jaffe			& 676		& 213		& 10	& 172		& 41		& multi-class, small size		\\
        	& Yale			& 1,024		& 165		& 15	& 135		& 30		& multi-class, small size		
        	\vspace{0.1cm}\\
        	 			
Low Dim. 	& Optical pen	& 64		& 1,797  	& 10	& 1,352	 	& 445		& multi-class, moderate size  \\
			& Pen digits	& 16		& 10,992  	& 10	& 1,575	 	& 9,417		& multi-class, moderate size 	\\
			& Semeion		& 256		& 1593		& 10	& 1,279		& 314		& multi-class, moderate size  	\\
			& Iris			& 4			& 150  		& 3		& 114 	 	& 36		& multi-class, small size 		\\
 			& Wine			& 13		& 178  		& 3		& 135	 	& 43		& multi-class, small size 	 \\
 			& Tic-tac-toe	& 9			& 958  		& 2		& 719	 	& 239		& binary-class, moderate size 	\\

\Xhline{1.2pt} 
\end{tabular} 
\end{threeparttable}}
\end{table}

\begin{theorem}[{\bf Property of DRM-APG} ]
 \label{thm_APG_property} With any $w^{(0)} \in {\mathcal{R}}^n$, 
 $\{  w^{(t)}   \}_{t=0}^{\infty}$ generated by 
Algorithm \ref{alg-APG} converges in function value to the global minimum of $f(w)$ with a rate of $O(\frac{1}{t^2})$. 
Each iteration costs $O(p n^2)$ flops with a general kernel, while  $O(p n)$ flops with the linear kernel.
\end{theorem}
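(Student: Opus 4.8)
The plan is to recognize the DRM-APG as a verbatim instance of the accelerated proximal gradient (APG) scheme of \cite{BeckTeboulle2009} applied to the smooth, strongly convex objective $f(w)$, to invoke their convergence theorem for the $O(1/t^2)$ rate, and then to tally the per-iteration flop counts already obtained just above for the cost claim.

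First I would record the structural facts. Since $\rho(x)=\frac{1}{2}x^2$, the regularizer is $\frac{\beta}{2}||w||_2^2$ and $f(w)=\frac{1}{2}w^T(Q+\beta I)w - w^T K_x$ is a convex quadratic; with $\beta>0$ it is $\beta$-strongly convex, so its minimizer is unique and equals $w^\star$ of \cref{eq_drr_solution}. Its gradient is Lipschitz with constant at most $||Q+\beta I||_2\le c+\beta$, as already noted. Because $\rho$ is smooth, the proximal step degenerates to a plain gradient step, so $\hat{w}_b(v)=v-\frac{1}{b}\nabla f(v)$, and the updates \cref{eq_APG_w,eq_APG_d_v} coincide with the APG iterations.

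The step I would verify with care is the backtracking. For the quadratic $f$, expanding $f(\hat{w}_b(v))$ and $G(\hat{w}_b(v)|v,b)$ at $\hat{w}_b(v)=v-\frac{1}{b}\nabla f(v)$ shows that the majorization inequality $f(\hat{w}_b(v))\le G(\hat{w}_b(v)|v,b)$ is algebraically equivalent to \cref{backtracking_condition}, namely $\nabla f(v)^T Q\,\nabla f(v)\le (b-\beta)||\nabla f(v)||_2^2$. This holds automatically once $b-\beta\ge\sigma_{max}(Q)$, since $\nabla f(v)^T Q\,\nabla f(v)\le\sigma_{max}(Q)||\nabla f(v)||_2^2$; hence the inner search terminates after a finite integer $i_t$, and every accepted $b^{(t+1)}$ is bounded by $\eta(\sigma_{max}(Q)+\beta)$. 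With the majorization property established, the hypotheses of the Beck--Teboulle theorem are satisfied, yielding $f(w^{(t)})-f(w^\star)\le 2b\,||w^{(0)}-w^\star||_2^2/(t+1)^2$, that is, convergence in function value at rate $O(1/t^2)$; the strong convexity already recorded guarantees the limit is the unique global minimum.

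The cost claim is then bookkeeping from the flop counts derived immediately before the theorem. With a general kernel, the dominant work per iteration is forming $Q$ and the matrix--vector products defining $\nabla f$, giving $O(pn^2)$; with the linear kernel, evaluating $Qv=Kv+\alpha Hv-\alpha Bv$ through the identities \cref{eq_Kw,eq_Bw_linearkernel,eq_Hw} costs $O(pn)$, while Steps \ref{step-update-w} and \ref{step-update-d-v} add only $O(pn)$, so each iteration is $O(pn)$. I expect the only genuine obstacle to be the equivalence of \cref{backtracking_condition} with the majorization inequality and the consequent finiteness of the backtracking search; once that is in place, the rate follows from the standard APG guarantee and the cost from the earlier accounting.
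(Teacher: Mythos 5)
Your proposal is correct and follows essentially the same route as the paper: the paper likewise disposes of the rate by invoking the standard accelerated-gradient guarantee (citing Nesterov, with Beck--Teboulle cited in the surrounding text), uses strong convexity of $f$ for uniqueness of the minimizer, and obtains the cost claim from the flop accounting carried out just before the theorem. Your additional verification that \cref{backtracking_condition} is equivalent to the majorization inequality and that the backtracking terminates once $b-\beta\ge\sigma_{max}(Q)$ simply makes explicit what the paper asserts as ``straightforward algebra'' and the finiteness of $i_t$ in the preceding text.
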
 

\noindent{\em{Proof}}: The convergence and its rate are standard; 
see, for example, \cite{Nesterov1983}. The uniqueness of the global minimizer is due to the strong convexity of $f(w)$. 
The cost is analyzed similarly to \cref{prop_majorization} or \cref{thm_gradient_property}. $\hfill \Box$

\noindent{\bf{Remarks}} 1. The backtracking step \ref{step-check-condition} 
of Algorithm \ref{alg-APG} can be skipped
when the Lipschitz constant $b$ is pre-computed. 2. With the linear kernel, 
each iteration of the APG algorithm takes $(4p+5)n$ flops. 
Consequently, linear cost is achieved in a way similar to Algorithm \ref{alg_gradient_descent} or \ref{alg_scalable_DRM}.
The detail of the linear DRM-APG is not repeated here.

The desirable properties of linear efficiency and scalability 
in Algorithms \ref{alg_gradient_descent}, \ref{alg_scalable_DRM} and \ref{alg-APG}
can be generalized to other kernels, provided some conditions such as low-rank approximation are met. 
One way for generalization is stated in \cref{prop-generalization}.

\begin{prop}[\bf Generalization of Linear DRM Algorithms to Any Kernel]
\label{prop-generalization} 
Algorithms  \ref{alg_gradient_descent},   \ref{alg_scalable_DRM}, 
and \ref{alg-APG} are applicable to any kernel function $k(\cdot, \cdot)$, provided its corresponding kernel matrix $K$ can be approximately factorized, 
$K \approx G^T G$, with $G \in {\mathcal{R}}^{r \times n}$ and  $r \ll n$. The computational cost of the generalized algorithm is $O(rmn)$, with $m$ iterations.
\end{prop}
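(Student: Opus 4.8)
The plan is to observe that every favorable property of the linear kernel exploited in \cref{alg_gradient_descent,alg_scalable_DRM,alg-APG} stems from a single structural fact: that the kernel matrix factorizes as $K = A^T A$ with a thin factor $A \in \mathcal{R}^{p \times n}$. This is precisely what allowed each matrix-vector product $Kv$, $Bv$, and $Hv$ in \cref{eq_Kw,eq_Bw_linearkernel,eq_Hw} to be evaluated through $A$ at $O(pn)$ cost rather than through $K$ at $O(n^2)$ cost. An approximate factorization $K \approx G^T G$ with $G \in \mathcal{R}^{r \times n}$ has exactly the same form, with $G$ playing the role of $A$ and the reduced rank $r$ playing the role of $p$. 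First I would therefore set $\tilde{K} := G^T G$ and carry out the entire derivation with $G$ substituted for $A$ and $r$ for $p$.

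The key steps are the three matrix-vector products that build $\tilde{Q} v = \tilde{K} v + \alpha \tilde{H} v - \alpha \tilde{B} v$. Partition $G = [G^1, \cdots, G^g]$ columnwise in accordance with the class groups of $A$; then the diagonal blocks obey $\tilde{K}^{ii} = (G^i)^T G^i$, so the block-wise recursion \cref{eq_Bw_linearkernel} carries over verbatim with $A^i$ replaced by $G^i$, giving $\tilde{B} v$ in $O(rn)$ flops. Likewise $\tilde{K} v = G^T (G v)$ reproduces \cref{eq_Kw} at $O(rn)$ cost, and the diagonal entries $\tilde{H}_{ii} = \| g_i \|_2^2$ (the squared norms of the columns $g_i$ of $G$) reproduce \cref{eq_Hw}, so $\tilde{H} v$ also costs $O(rn)$. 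Consequently a single product $\tilde{Q} v$ is obtained in $O(rn)$ flops.

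Next I would substitute this product into each of the three algorithms. Each iteration of \cref{alg_gradient_descent} needs the gradient \cref{eq_gradient} and the exact step size \cref{eq_gradient_stepsize}, both of which reduce to a constant number of $\tilde{Q} v$ products together with $O(n)$ inner-product and scaling operations; the update \cref{eq_majorization} driving \cref{alg_scalable_DRM} needs one such product; and \cref{eq_APG_w,eq_APG_d_v} in \cref{alg-APG} need one product, plus a finite, Lipschitz-bounded number inside the optional backtracking test \cref{backtracking_condition}. Hence every iteration costs $O(rn)$ flops, and over the $m$ iterations bounded as in \cref{thm_rate_DRM} the total cost is $O(rmn)$, exactly mirroring \cref{prop_majorization} with $p$ replaced by $r$.

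The one point that genuinely requires care --- and the main obstacle --- is that the convergence and optimality theorems were proved for the true matrix $K$, whereas the generalized algorithms operate on $\tilde{K} = G^T G$ and thus solve a perturbed objective. This is resolved by noting that $G^T G$ is itself a bona fide positive semidefinite Gram matrix, namely the kernel matrix of the surrogate features given by the columns of $G$; hence \cref{prop_psd} and the accompanying propositions apply verbatim to $\tilde{K}$, so $\tilde{Q} := \tilde{K} + \alpha(\tilde{H} - \tilde{B})$ is p.s.d.\ and $\tilde{Q} + \beta I$ is strictly positive definite, and every convergence guarantee holds for the approximated problem. What remains outside the stated claim is bounding the gap between the approximate and exact minimizers, which scales with $\| K - G^T G \|_2$; the one-time formation of $G$ and of $K_x$ is likewise separate from the per-iteration budget, leaving the computational assertion $O(rmn)$ exactly as claimed.
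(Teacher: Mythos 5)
Your proposal is correct and follows essentially the same route as the paper's proof, which simply replaces $A$ by $G$ and $p$ by $r$ in \cref{eq_Kw,eq_Bw_linearkernel,eq_Hw} and notes that the conclusion for \algref{alg_scalable_DRM} extends to Algorithms \ref{alg_gradient_descent} and \ref{alg-APG} in the same way. Your additional observation---that $G^T G$ is itself a valid p.s.d.\ Gram matrix so the convergence guarantees apply verbatim to the perturbed objective---is a useful elaboration the paper leaves implicit, but it does not change the underlying argument.
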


\noindent {\em{Proof}}: Replacing $A$ by $G$ and, correspondingly, $p$ by $r$ in \cref{eq_Kw,eq_Bw_linearkernel,eq_Hw}, the conclusion follows for \algref{alg_scalable_DRM}; and so do Algorithms \ref{alg_gradient_descent} or \ref{alg-APG} similarly. $\hfill \Box$

\noindent {\bf{Remarks}} 1. The low-rank approximation of the kernel matrix has been used in \cite{ferris2004semismooth} \cite{fine2002efficient} 
to develop SVM algorithms of quadratic convergence rate. 
2. The  decomposition of $K$, in general, has a high cost of  $O(n^3)$, and thus is not always readily satisfied. 
In this paper we mainly consider using the linear kernel for scalable classification on large-scale data.

\begin{algorithm}
	\scriptsize
	\caption{DRM-APG with optional backtracking.} 
	\label{alg-APG}
	\begin{algorithmic} [1] 
		\STATE \textbf{Input:} $A=[x_1,...,x_n] \in
		\mathcal{R}^{p\times n}$, $y=[y_1, \cdots, y_n]$, 
		$x \in \mathcal{R}^p$, $\alpha \ge 0$, $\beta >0$, and a kernel.
		\STATE Initialize $t=0$, $w^{(0)} = v^{(0)} \in {\mathcal{R}}^n$, $b^{(0)} >0$, $\eta > 1, d^{(0)} = 1$,  and set $\epsilon>0$.
		\STATE Repeat
		\STATE $\quad$ \label{step-check-condition} (optional) Find the smallest nonnegative integer $i_t$ such that \\ \quad \quad \quad \quad \quad \cref{backtracking_condition} holds with $b := \eta^{i_t} b^{(t)}$. Let $b^{(t+1)}= b$.
		\STATE $\quad$ \label{step-update-w} Update $w^{(t+1)}$ by \cref{eq_APG_w}.
		\STATE $\quad$ \label{step-update-d-v} Update $d^{(t+1)}$ and $v^{(t+1)}$ by \cref{eq_APG_d_v}.
		\STATE Until $|| w^{(t+1)} - w^{(t)} ||_2 \le \epsilon$, then exit the iteration and let ${w^{\star}} = w^{(t+1)}$; else, let $t= t+1$.
		\STATE \label{APG_classifying} Compute \cref{eq_mindistrule}.
		\STATE Estimate the label for $x$ with \cref{kmindist_decision}.
		\STATE \textbf{Output:} Estimated class label for $x$.
		\normalsize
	\end{algorithmic}
\end{algorithm}

\begin{table*}
	\tiny
	\centering
	\caption{ Classification Accuracy of SVM, KNN, RF, C4.5, NB, and DRM on High- and Low-Dimensional Data }
	\resizebox{0.8\textwidth}{!}{%
		\begin{threeparttable}
			\begin{tabular}{l l c c c c c c c c c}
				\Xhline{1.2pt}
				Type & Dataset 	& SVM(R) 			& SVM(P) 					& KNN(E) 			& KNN(C) 			& RF 				& C4.5				& NB					& DRM(R) 				& DRM(P)  \\ \Xhline{1.2pt}
				High Dim.
				
				& Nakayama	 	& 0.6286$\pm$0.0621	& 0.7619$\pm$0.1010			& 0.7429$\pm$0.0722	& 0.7524$\pm$0.0782	& 0.6762$\pm$0.0916	& 0.3619$\pm$0.1043	&  -----------------	& 0.7619$\pm$0.0583		& \bf{0.7905$\pm$0.0543}	\\ 
				& Leukemia 		& 0.7500$\pm$0.0000	& \bf{1.0000$\pm$0.0000}	& 0.9500$\pm$0.1118	& 0.9750$\pm$0.0559	& 0.9250$\pm$0.1118	& 0.8500$\pm$0.2054	& \bf{1.0000$\pm$0.0000}& \bf{1.0000$\pm$0.0000}& \bf{1.0000$\pm$0.0000}	\\ 
				& Lungcancer 	& 0.9864$\pm$0.0203	& 0.9909$\pm$0.0124		 	& 0.9955$\pm$0.0102	& 0.9955$\pm$0.0102	& 0.9955$\pm$0.0102	& 0.9273$\pm$0.0466	& 0.9864$\pm$0.0124		& \bf{1.0000$\pm$0.0000}& \bf{1.0000$\pm$0.0000}	\\ 
				& Prostate 		& 0.6788$\pm$0.0507	& \bf{0.9455$\pm$0.0498}	& 0.8061$\pm$0.0819	& 0.8121$\pm$0.0498	& 0.8727$\pm$0.0657	& 0.6545$\pm$0.0628	& 0.6061$\pm$0.0851 	& 0.8848$\pm$0.0449		& 0.9333$\pm$0.0542	\\ 
				& GCM	 		& 0.4974$\pm$0.0344	& 0.6462$\pm$0.0380			& 0.6154$\pm$0.0480 & 0.6462$\pm$0.0380	& 0.6051$\pm$0.0466	& 0.2410$\pm$0.0562	& 0.7077$\pm$0.0229		& 0.8308$\pm$0.0389		& \bf{0.8410$\pm$0.0493}	\\ 
				& Sun Data		& 0.5535$\pm$0.0477	& 0.6884$\pm$0.0746			& 0.7302$\pm$0.0535	& 0.7070$\pm$0.0628	& 0.7163$\pm$0.0104	& 0.4884$\pm$0.0465	& 0.6977$\pm$0.0658		& 0.7442$\pm$0.0637		& \bf{0.7488$\pm$0.0666}	\\ 
				& Ramaswamy		& 0.6182$\pm$0.0743	& 0.6906$\pm$0.0380			& 0.7000$\pm$0.0296 & 0.7000$\pm$0.0296 & 0.6909$\pm$0.0903	& 0.5273$\pm$0.0943	& 0.6227$\pm$0.0674		& \bf{0.8273$\pm$0.0729}& 0.8000$\pm$0.0588	\\
				& NCI			& 0.0769$\pm$0.0000	& 0.5385$\pm$0.0769			& 0.4308$\pm$0.1397 & 0.4154$\pm$0.1287	& 0.4462$\pm$0.1668	& -----------------	& -----------------	& \bf{0.6308$\pm$0.1141}& 0.5231$\pm$0.1264	\vspace{0.1cm} 	\\

				Image Data
				& AR			& 0.6320$\pm$0.0210		& 0.9464$\pm$0.0112	& 0.6552$\pm$0.0411	& 0.6816$\pm$0.0346	& 0.7944$\pm$0.0384 		& 0.1040$\pm$0.0216	& -----------------	& \bf{0.9856$\pm$0.0112}	& 0.9776$\pm$0.0112	\\ 
				& EYaleB		& 0.8163$\pm$0.0146		& 0.9233$\pm$0.0210	& 0.7490$\pm$0.0103	& 0.8507$\pm$0.0126	& 0.9700$\pm$0.0047 		& 0.1370$\pm$0.0117	& -----------------	& 0.9830$\pm$0.0040			&\bf {0.9920$\pm$0.0059}	\\ 
				& Jaffe			& \bf{1.0000$\pm$0.0000}& 0.4146$\pm$0.0000	& 0.9951$\pm$0.0109	& 0.9951$\pm$0.0109	& \bf{1.0000$\pm$0.0000}	& 0.8098$\pm$0.0966	& -----------------	& 0.9951$\pm$0.0109			& \bf{1.0000$\pm$0.0000}	\\ 
				& Yale			& 0.7133$\pm$0.0960		& 0.7067$\pm$0.1188	& 0.6467$\pm$0.0691	& 0.6533$\pm$0.0989	& 0.7733$\pm$0.0435 		& 0.4067$\pm$0.0925	& 0.6267$\pm$0.0596	& 0.8067$\pm$0.0596			& \bf{0.8133$\pm$0.0558}	\\ 
				& PIX			& 0.9400$\pm$0.0584		& 0.9300$\pm$0.0477	& 0.9600$\pm$0.0652 & 0.9600$\pm$0.0652	& \bf{0.9800$\pm$0.0274} 	& 0.8537$\pm$0.0827	& 0.8900$\pm$0.0962 & 0.9600$\pm$0.0652			& 0.9600$\pm$0.0652	\vspace{0.1cm} \\ 
				
				Low Dim.
				& Semeion			& 0.9433$\pm$0.0042		& 0.9204$\pm$0.0050	& 0.9255$\pm$0.0138 & 0.9185$\pm$0.0036		& 0.9236$\pm$0.0172 	& 0.6287$\pm$0.0486	&  -----------------	& \bf{0.9624$\pm$0.0116}	& 0.9611$\pm$0.0079	\\ 
				& Pendigits			& 0.9867$\pm$0.0020		& 0.9865$\pm$0.0017	& 0.9835$\pm$0.0019	& 0.9839$\pm$0.0019		& 0.9686$\pm$0.0031 	& 0.6963$\pm$0.0278	&  -----------------	& \bf{0.9911$\pm$0.0017}	& 0.9910$\pm$0.0015	\\ 
				& Opticalpen		& \bf{0.9924$\pm$0.0034}& 0.9856$\pm$0.0020	& 0.9888$\pm$0.0036	& 0.9879$\pm$0.0059		& 0.9730$\pm$0.0086 	& 0.1011$\pm$0.0000	&  -----------------	& 0.9915$\pm$0.0040			& \bf{0.9924$\pm$0.0030}	\\ 
				
				& Iris				& 0.9778$\pm$0.0232		& 0.9778$\pm$0.0232 & 0.9667$\pm$0.0304	& \bf{0.9833$\pm$0.0152}& 0.9556$\pm$0.0317 	& 0.9611$\pm$0.0317	& 0.9596$\pm$0.0421	& 0.9667$\pm$0.0562		& \bf{0.9833$\pm$0.0152}	\\ 
				& Wine				& 0.9674$\pm$0.0353		& 0.9814$\pm$0.0195	& 0.8791$\pm$0.0602	& 0.9349$\pm$0.0382		& \bf{0.9860$\pm$0.0208}& 0.9395$\pm$0.0353	& 0.9767$\pm$0.0233	& 0.9116$\pm$0.0504		& 0.9581$\pm$0.0382	\\ 
				& Tic$\_$tac$\_$toe	& 0.8494$\pm$0.0118		& 0.9724$\pm$0.0128	& 0.8418$\pm$0.0174	& 0.8427$\pm$0.0158		& 0.9264$\pm$0.0134 	& 0.7665$\pm$0.0321	& 0.6996$\pm$0.0130	& \bf{0.9950$\pm$0.0035}& 0.9941$\pm$0.0092	\\ 
				
				\Xhline{1.2pt}
				\label{tab_acc_small}
			\end{tabular}
		\end{threeparttable}}
		
		\scriptsize For each data set, the best accuracy rate is boldfaced. The performance is represented as average accuracy $\pm$ standard deviation.
	\end{table*}

\section{Experiments}
\label{sec_experiments}
In this section, we present how our method performs on standard test data sets in real world applications. 
First, we show our results on high- and low-dimensional data and compare them with several state-of-the-art methods. 
Then we compare the linear DRM and the linear SVM on large-scale data and evaluate our three algorithms DRM-PPA, DRM-GD and DRM-APG 
in terms of accuracy rate, time cost for training, and convergence rate. Last, we testify the DRM in imbalanced data classification.

All experiments are implemented in Matlab on a 4-core Intel Core i7-4510U 2.00GHz laptop with 16G memory. 
We terminate the algorithm when $|| w^{(t+1)} - w^{(t)} ||_2 \le 10^{-5}$ for the DRM algorithms, or when they reach the maximal number of iterations of 150. 
As experimentally shown, all three algorithms usually converge within a few tens of iterations.  

\subsection{Application to Real World Data}
\label{sec:app_small}

We conduct experiments on 19 data sets in three categories: high-dimensional data for gene expression classification, image data for face recognition, and low-dimensional data including hand-written digits and others. Among them, the first two types are high-diensional while the last is low-dimensional. Their characteristics and the partition of training and testing sets are listed in \cref{Tab:summary_small}. { For each data set, we perform experiments on 5 random trials and report their average accuracy rates and standard deviations for different algorithms. }

\subsubsection{Gene Expression}
\label{sec_geneexpression}

For high-dimensional data, we have eight gene expression data sets: Global Cancer Map (GCM), 
Lung caner, Sun data \cite{sun2006neuronal}, 
Prostate, Ramaswamy, Leukemia, NCI, and Nakayama \cite{nakayama2007gene}. 
{
	To illustrate the effectiveness of the proposed method, we compare it with currently state-of-the-art classifiers on such data, including the SVM, KNN, RF, C4.5, and Naive Bayes (NB).}
For each of these classifiers, leave-one-out cross validation (LOOCV) is conducted for parameter selection. For the SVM, we use linear, polynomial, and radial basis function (rbf) kernel. 
The order of polynomial kernel is selected from the set of $\{2,3,4,5,8,10\}$, and the variance parameter of the rbf kernel is selected from $\{0.001,0.01,0.1,1,10,100,1000\}$. 
{The balancing parameter of SVM is selected from the set of $\{0.001,0.01,0.1,1,10,100,1000\}$. }
For the KNN, we adopt the Euclidean and cosine based distances and the number of neighbors are selected within $\{2,3,4,5,6,7,8\}$.
For RF, the number of trees is selected from the set $\{10,20,30,40\}$. 
{ For the C4.5, we choose the percentage of incorrectly assigned examples at a node to be $\{5\%,6\%,7\%,8\%,9\%,10\%\}$.}
For the DRM, the parameter options for kernels remain the same as the SVM and the value of $\alpha$ and $\beta$ range in $\{0.001,0.01,0.1,1,10,100,1000\}$.
All these parameters are determined by the LOOCV. We conduct experiments on both scaled and unscaled data for all classifiers and report the better performance. Here, for each data set, the scaled data set is obtained by dividing each feature value with the infinite norm of the corresponding feature vector. We report the average performance on the 5 splits in \cref{tab_acc_small}. 
{ Here, for clearer illustration of how these methods perform, we seperately report the best performances of rbf and polynomial kernels for SVM and DRM, as well as those of cosine and Euclidean distances for KNN. It should be noted that in practice the kernel and distance types can be determined by cross-validation. }

From \cref{tab_acc_small}, it is seen that the proposed method has the highest accuracy on seven out of eight data sets, except for Prostate, with significant improvement in classification accuracy. For example, the DRM has at least 20\%, 13\% and 10\% improvements on GCM, Ramaswamy, and NCI data sets, respectively. Besides, the DRM achieves the second best performance, which is compariable to the best. Moreover, in many cases, the DRM obtains the top two highest classification accuracy. These observations have confirmed the effectiveness of the DRM on high-dimensional data and application of gene expression classification.

\begin{figure}[!tb]
	\centering{
\includegraphics[width=0.9\columnwidth]{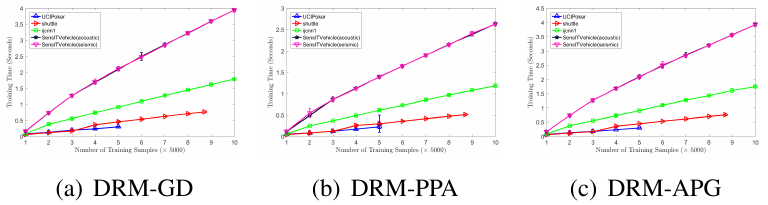}
		\caption{Training time of an example versus training sample size $n$, for different data sets. All algorithms run 50 iterations with $\alpha=10^{-3}$ and $\beta=10^4$.}
		\label{fig_time_all}
	}
\end{figure}

\subsubsection{Face Recognition}
Face recognition is an important classification problem and we examine the performance of the DRM on high-dimensional data in this application. In this test, five commonly used face data sets are used, including Extended Yale B \cite{georghiades2001few}, AR \cite{martinez1998ar}, Jaffe, Yale \cite{belhumeur1997eigenfaces}, and PIX \cite{hond1997distinctive}. 
The EYaleB data set has 2,414 images for 38 individuals under different viewing conditions. All original images are of size $192\times 168$ whose cropped and resized version of $32\times32$ pixels \cite{georghiades2001few} are used in the experiments. 
For AR data set, 25 women and 25 men are uniformly selected. All images have size of $32\times32$ pixels. 
Jaffe data set contains face images of 10 Japanese females with 7 different facial expressions. Each image has size of $26\times 26$.
PIX data set collects 100 gray scale images of $100\times100$ pixels from 10 objects. 
Yale data set contains 165 gray scale images of 15 persons with 11 images of size 32$\times$32 per person. 
In the experiments, all images are vectorized as columns in the corresponding data matrix. 

{
It should be pointed out that deep convolutional neural network (CNN) \cite{Krizhevsky2012ImageNet} has achieved promising performance for many large-scale image classification tasks.
However, in this paper, we do not include CNN as a baseline method due to the following reasons:
1) The key reason is the lack of comprehensive theoretical understanding of learning with deep neural networks such as the generalization ability, 
and thus they are often criticized for being used as a ``black box'' \cite{alain2016understanding};
2) CNN is treated as a specialized classifier for image data, whereas the DRM is a general classifier that can be used in various scenarios. 
It is unfair to compare such methods;
3) It is convincing to claim at least comparable performance of the DRM to CNN
\footnote{
As reported in later parts of this section, we test the DRM on 8 image data sets, including 5 face image data and 3 digit image data.
The DRM has achieved higher than 96\% accuracy on 7 data sets, out of which 5 data sets have accuracy above 98\% and 4 data sets above 99\%.
Thus, it is convincing that the DRM is competitive even if CNN has better performance since there is little room to improve the performance.}. 
}
In this test, all experimental settings remain the same as in \cref{sec_geneexpression} and we report the average performance on 5 splits in \cref{tab_acc_small}. 
From the results, it is seen that the proposed method achieves the best performance on 4 out of 5 data sets with significant improvements. 
On PIX data, the DRM is not the best, but is comparable to the RF and obtains the second best performance. 

In summary, the DRM achieves the highest accuracy rates on eleven out of thirteen high-dimensional data sets, including gene expression and face image data sets, and comparable performance on the rest two. Even though the other methods may achieve the best performance on a few data sets, in many cases they are not competetive. Meanwhile, the DRM always show competitive performance, which implies the effectiveness of the DRM.

\subsubsection{Low-Dimensional Data}
Though the DRM is manily intended on classifying high-dimensional data, we also test its classification capability on classic low-dioensional data. For this test, we use six data sets, inluding three hand-written digits and three others. 
Hand-written digits are image data sets of much fewer pixels than face images and thus treated as low-dimensional data sets. Three widely used hand-written data sets \cite{Bache+Lichman:2013} are tested: hand-written pen digits, optical pen, and Semeion Handwritten Digit (SHD) \cite{semeion2014}. They all have 10 classes representing the digits 0-9. 
The sizes of images in Optical pen, digits, and SHD are $8 \times 8$, $4 \times 4$, and $16 \times 16$, respectively. All images are reshaped into vectors. 
Besides, three data sets, iris, wine and tic-tac-toe, from the UCI Machine Learning Repository, are included here to show the applicability of the DRM to classic, low-dimensional data. 
All experimental settings remain the same as previous tests and we report the average performance on five splits in \cref{tab_acc_small}.

\cref{tab_acc_small} indicates that the DRM performs the best on five data sets. Especially, the DRM achieves the best performance on all these hand-written digits data sets. Moreover, the DRM has comparable performance using both rbf and polynomial kernels. These observations verify that the DRM is also effective on low-dimensional data classification.

\begin{figure}[!tb]
	\centering
\includegraphics[width=0.9\columnwidth]{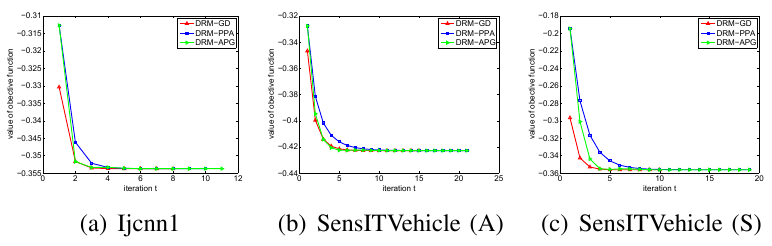}
	\caption{Value of objective function versus iteration, for the DRM on different data sets.}
	\label{fig_conv_comp}
\end{figure}

\begin{figure}[!tb]
	\centering{
		\includegraphics[width=0.9\columnwidth]{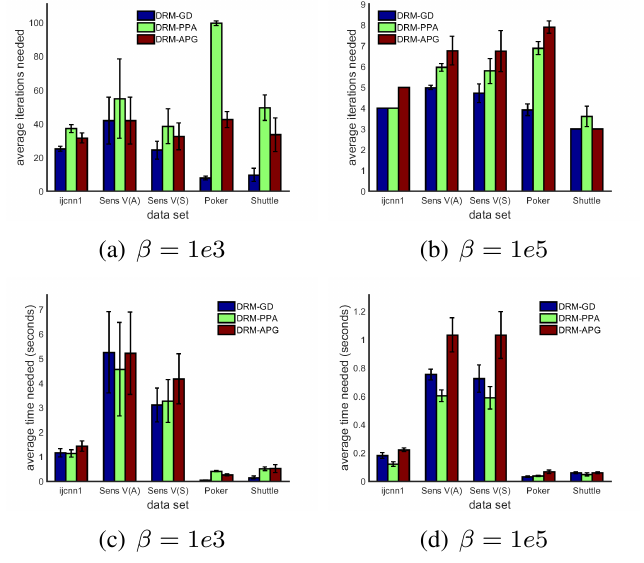}
		\caption{ Iterations and time needed by three DRM algorithms on different data sets, with $\epsilon = 1e-5, \alpha = 1e-3$. 
			SensV(A) and SensV(S) stand for SensITVehicle (Acoustic) and (Seismic), respectively.}
		\label{fig_iterations4datasets}
	}
\end{figure} 

\begin{figure}[!tb]
	\centering{
		\includegraphics[width=0.9\columnwidth]{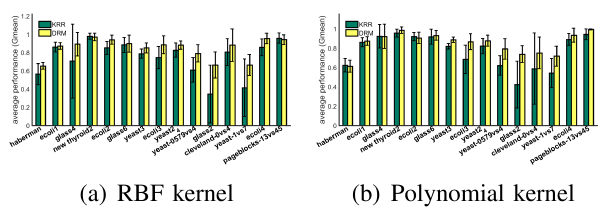}
		\caption{ Comparison between DRM and KRR, where all parameters are selected by CV.}
		\label{fig_abalation_overall}
	}
\end{figure}


\subsubsection{Large-Scale Data}
We evaluate the performance of the linear-DRM-PPA, linear-DRM-GD, and linear-DRM-APG on 5 standard large-scale data sets: {UCI Poker} \cite{Bache+Lichman:2013}, {ijcnn1} \cite{chang2001ijcnn}, {SensIT Vehicle (Acoustic)} \cite{duarte2004vehicle}, {SensIT Vehicle (Seismic)} \cite{duarte2004vehicle}, and {Shuttle}.
%
%
{
All data sets have already been split into training and test sets.
It is noted that such large sample size would effectively avoid the effects of occasionality of samples and it is unnecessary to re-split the data.
Thus we follow the original splitting.}
We conduct random split CV on the training sets to select the parameters $\alpha$ and $\beta$ with a grid search from $10^{-6}$ to $10^{-1}$ and from $10^{-6}$ to $10^{8}$, respectively. To reduce computational complexity, we randomly select a subset from the training set to do the CV. The size of the subset is usually set to 3000. All data sets are scaled to $\left[-1, 1\right]$ by dividing each feature with the infinite norm of the corresponding feature vector. 
We compare the prediction accuracy with the $L_{1}$-SVM and $L_{2}$-SVM for which TRON \cite{lin2008trust} 
is employed as the solver. \cref{Tab:acc_bigdata3} gives the accuracy rates, from which two observations are made: 
(1) The accuracy of the SVM is (almost) exactly the same as that reported in \cite{chang2008coordinate}; (2) The proposed DRM method outperforms the SVM on all the data sets significantly; especially, on Shuttle the DRM exceeds the SVM by $53.2\%$.


\begin{figure*}[!tb]
	\centering{
		\includegraphics[width=1.6\columnwidth]{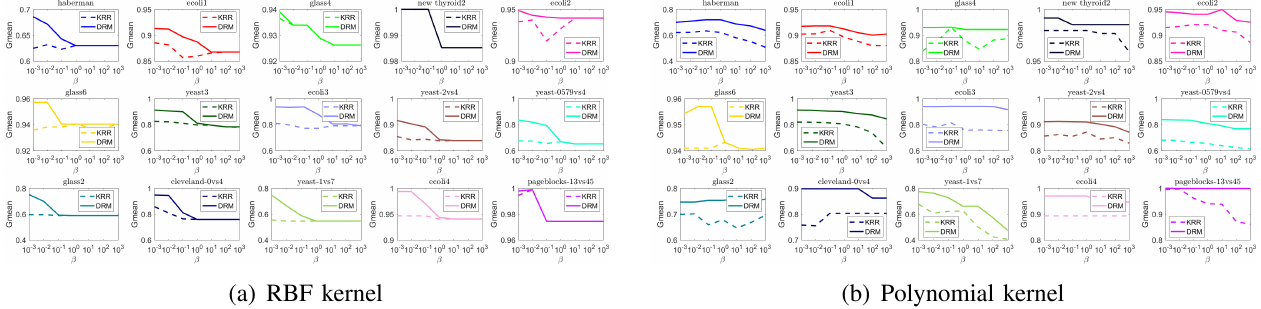}
		\caption{ Alalation study: comparison between DRM and KRR by fixing $\beta$ value and varing the others.}
		\label{fig_abalation_alpha}
	}
\end{figure*}

\begin{table}[!tb]
\centering
\resizebox{0.4\textwidth}{!}{%
\begin{threeparttable}
\caption{Large-Scale Data Used in Experiments}
\label{Tab:summary_bigdata}
\begin{tabular}{l l l l l l l l}
\Xhline{1.8pt}
Dataset 					& Size  					&\quad	& Class  	&\quad	& Training 	&	\qquad& Testing 		\\ \Xhline{1.2pt}
 UCI Poker 					& 1,025,010 $\times$ 10 	&\quad	& 10		&\quad	& 25,010 	&	\qquad& 1,000,000 	\\
  ijcnn1 					& 141,691 	$\times$ 22 	&\quad	& 2 		&\quad	& 49,990 	&	\qquad& 91,701 			\\
  SensIT Vehicle (A) 		& 98,528 	$\times$ 50 	&\quad	& 3 		&\quad	& 78,823 	&	\qquad& 19,705  		\\
  SensIT Vehicle (S) 		& 98,528 	$\times$ 50 	&\quad	& 3 		&\quad	& 78,823  	&	\qquad& 19,705 			\\
   Shuttle 					& 58,000 	$\times$ 9 		&\quad	& 7 		&\quad	& 43,500 	&	\qquad& 14,500  		\\
\Xhline{1.8pt}
\end{tabular}
\footnotesize For the five data sets, we follow the original splitting.
\end{threeparttable}
}
\end{table}

\begin{table}[!tb]
\centering
\resizebox{0.4\textwidth}{!}{%
\begin{threeparttable}
\caption{ Prediction Accuracy by Using SVM and DRM}
\label{Tab:acc_bigdata3}
\begin{tabular}{l c c c c c}
\Xhline{1.8pt}
Dataset 			& $L_{1}$-SVM 	& $L_{2}$-SVM 	& DRM-GD & DRM-PPA  & DRM-APG \\ \Xhline{1.2pt}
UCI Poker 			& 31.5\%  		& 33.8\%		& \textbf{50.1\%} &	\textbf{50.1\%} & \textbf{50.1\%}\\
IJCNN1				& 67.3\%		& 74.2\%		& \textbf{90.1\%} &	\textbf{90.5\%}	& \textbf{90.5\%}\\
SensIT Vehicle (A) 	& 43.5\%		& 45.9\%		& \textbf{65.1\%} &	\textbf{64.6\%}	& \textbf{65.5\%}\\
SensIT Vehicle (S) 	& 41.6\%		& 42.2\%		& \textbf{62.8\%} &	\textbf{62.8\%}	& \textbf{62.8\%}\\
Shuttle 			& 35.9\%		& 29.7\%		& \textbf{89.6\%} &	\textbf{89.2\%}	& \textbf{90.0\%} \\
\Xhline{1.8pt}
\end{tabular}
\end{threeparttable}
}
\end{table}

\begin{table}
	\centering
	\caption{Confusion Matrix}
	\label{Tab_confusion_matrix}%
	\resizebox{0.4\textwidth}{!}{%
		\begin{threeparttable}
			\begin{tabular}{l c c}
				\Xhline{1.8pt}
				& Predicted as positive 		& Predicted as negative	\\ \Xhline{1.2pt}
				Actually positive	& True Positive (TP)					& False Negative (FN)			\\	
				Actually negative	& False Positive (FP)					& True Negative	 (TN)		\\		
				\Xhline{1.2pt} \\
			\end{tabular} 
		\end{threeparttable}}
	\end{table}
	
	\begin{table}
		\centering
		\caption{Summary of Imbalanced Data Sets}
		\label{tab_summary_imbalanced}%
		\resizebox{0.4\textwidth}{!}{%
			\begin{threeparttable}
				\begin{tabular}{ l l l l || l l l l }
					\Xhline{1.8pt}
					Data Set 					& Dim. 		& Size  	& IR 		& Data Set 					& Dim. 		& Size  	& IR 		\\ \Xhline{1.2pt}
					
					haberman					& 3			& 306		& 2.78		& yeast-2vs4			& 8			& 514	  	& 9.08	 	\\
					ecoli1						& 7			& 336  		& 3.36 		& yeast-05679vs4	& 8			& 528	  	& 9.35	 	\\		
					glass4						& 9			& 214  		& 3.98 		& glass2						& 9			& 214  		& 11.59		\\	
					new thyroid 2				& 5			& 215  		& 5.14	  	& cleveland-0vs4		& 13 		& 173  		& 12.31 	\\ 
					ecoli2						& 7			& 336  		& 5.46	 	& yeast-1vs7			& 7			& 459	  	& 14.30	 	\\
					glass6						& 9			& 214  		& 6.38	 	& ecoli4						& 7			& 336  		& 15.80 	\\
					yeast3						& 8			& 1484  	& 8.10	 	& page-blocks-13vs45	& 10		& 472  		& 15.86		 \\	
					ecoli3						& 7			& 336  		& 8.60	 	& & & & \\
					
					\Xhline{1.2pt} \\
				\end{tabular} 
			\end{threeparttable}}
		\end{table}
\begin{table*}[!t]
	\centering
	\tiny
	\caption{ Classification Performance (G-mean) of SVM, KNN, RF, C4.5, NB, SLDA, and DRM on Imbalanced Data }
	\label{tab_imbalanced}%
	\resizebox{0.8\textwidth}{!}{%
		\begin{threeparttable}
			\begin{tabular}{l cccccccc cc}
				\Xhline{1.8pt}
				Data Set 			& SVM(R)				& SVM(P)  			& KNN(E) 				& KNN(C)				& RF				& C4.5					& NB					& SLDA					&	DRM(R)				& DRM(P) 				\\ \Xhline{1.2pt}
				haberman			& 0.4212$\pm$0.1451		& 0.5578$\pm$0.1249	& 0.5315$\pm$0.0455		& 0.5188$\pm$0.1318		& 0.4532$\pm$0.0761	& --------------------	& 0.5000$\pm$0.1562		& 0.4359$\pm$0.1048		& \bf{0.6541$\pm$0.0364}& 0.6129$\pm$0.0618		\\
				ecoli1				& \bf{0.8830$\pm$0.0383}& 0.4082$\pm$0.2618 & 0.8651$\pm$0.0426 	& 0.8435$\pm$0.0830 	& 0.8517$\pm$0.0777	& 0.4093$\pm$0.0312		& --------------------	& 0.7792$\pm$0.0413	& 0.8714$\pm$0.0366		& 0.8728$\pm$0.0466 		\\		
				glass4				& 0.6849$\pm$0.1855		& 0.8858$\pm$0.1311 & 0.8479$\pm$0.1973		& 0.8479$\pm$0.1973		& 0.6358$\pm$0.3858	& --------------------	& 0.7210$\pm$0.1623		& 0.3528$\pm$0.3364	& 0.8926$\pm$0.1290		& \bf{0.9219$\pm$0.1257}  	\\	
				new thyroid 2		& 0.8936$\pm$0.0442		& 0.9824$\pm$0.0322 & \bf{0.9852$\pm$0.0332}& 0.9796$\pm$0.0308		& 0.9543$\pm$0.0511	& 0.3596$\pm$0.0192		& 0.9859$\pm$0.0142		& 0.5227$\pm$0.0880	& 0.9703$\pm$0.0406		& \bf{0.9852$\pm$0.0332} 	\\ 
				ecoli2				& 0.8143$\pm$0.0438		& 0.0000$\pm$0.0000	& 0.9380$\pm$0.0494		& \bf{0.9494$\pm$0.0398}& 0.8592$\pm$0.0715	& 0.3468$\pm$0.0495		& --------------------	& 0.5880$\pm$0.1993	& 0.9415$\pm$0.0510		& 0.9057$\pm$0.0573			\\
				glass6				& 0.8176$\pm$0.0836		& 0.8628$\pm$0.0797 & 0.8625$\pm$0.0766 	& 0.8843$\pm$0.0747		& 0.8673$\pm$0.0759	& --------------------	& 0.8837$\pm$0.0919		& 0.8503$\pm$0.0984	& 0.8995$\pm$0.0940		& \bf{0.9323$\pm$0.0496} 	\\
				yeast3				& 0.7253$\pm$0.0508		& -------------------- 	& 0.8141$\pm$0.0224 & 0.7752$\pm$0.0398		& 0.8222$\pm$0.0423	& 0.2129$\pm$0.1447		& --------------------	& 0.8510$\pm$0.0271	& 0.8509$\pm$0.0539		& \bf{0.8860$\pm$0.0279} 	\\
				ecoli3				& 0.3610$\pm$0.2159		& --------------------	& 0.7797$\pm$0.1493	& 0.7657$\pm$0.1438		& 0.6923$\pm$0.1161	& 0.2597$\pm$0.0361		& --------------------	& 0.6632$\pm$0.1561	& \bf{0.8862$\pm$0.0983}& 0.8668$\pm$0.0822			\\	
				yeast-2vs4			& 0.6542$\pm$0.1782		& -------------------- 	& 0.8322$\pm$0.0779 & 0.7710$\pm$0.0886		& 0.8222$\pm$0.1121	& 0.0685$\pm$0.1553		& --------------------	& 0.5558$\pm$0.0665	& \bf{0.8822$\pm$0.0459}& 0.8765$\pm$0.0582 		\\
				yeast-0579vs4		& 0.1258$\pm$0.1723		& --------------------	& 0.5686$\pm$0.1330	& 0.6375$\pm$0.1204		& 0.5840$\pm$0.1850	& 0.0557$\pm$0.0142		& --------------------	& 0.4578$\pm$0.0768	& 0.7911$\pm$0.0945		& \bf{0.7915$\pm$0.1058}	\\					
				
				glass2				& -------------------- 	& 0.3517$\pm$0.3378		& 0.5663$\pm$0.0736	& 0.3996$\pm$0.3816	& 0.1140$\pm$0.2550		& --------------------	& 0.4677$\pm$0.2643		& 0.3901$\pm$0.2497	& 0.6652$\pm$0.1441 	& \bf{0.7370$\pm$0.0869}	\\	
				cleveland-0vs4		& 0.5093$\pm$0.2909		& 0.5011$\pm$0.4858 	& 0.6332$\pm$0.3805 & 0.5294$\pm$0.3125	& 0.6103$\pm$0.3742		& 0.2880$\pm$0.0720		& 0.6962$\pm$0.4245		& 0.5596$\pm$0.0461	& \bf{0.8814$\pm$0.1783}& 0.7503$\pm$0.1635  		\\
				yeast-1vs7			& --------------------	& -------------------- 	& 0.5482$\pm$0.1487 & 0.4397$\pm$0.2643	& 0.4612$\pm$0.2578		& --------------------	& --------------------	& 0.4759$\pm$0.2709	& 0.6633$\pm$0.1161		& \bf{0.7174$\pm$0.1029} 	\\			
				ecoli4				& 0.8025$\pm$0.0870 	& -------------------- 	& 0.9464$\pm$0.0734 & 0.8610$\pm$0.1038 & 0.8435$\pm$0.2033 	& 0.2196$\pm$0.0494		& --------------------	& 0.8955$\pm$0.0694	& \bf{0.9545$\pm$0.0573}& 0.9343$\pm$0.0723 		\\ 	
				page-blocks-1-3vs45	& 0.6642$\pm$0.1955		& 0.9782$\pm$0.0395		& 0.9755$\pm$0.0456	& 0.9352$\pm$0.0592	& \bf{0.9989$\pm$0.0025}& 0.2539$\pm$0.0225		& 0.7324$\pm$0.1346		& 0.6589$\pm$0.0492	& 0.9433$\pm$0.0530		& 0.9932$\pm$0.0062			\\	
				
				\Xhline{1.2pt} \\
			\end{tabular} 
		\end{threeparttable}}
		\\ \scriptsize For each data set, the best G-mean value is boldfaced. The performance is represented as average G-mean $\pm$ standard deviation.
	\end{table*}
\begin{table*}
	\centering
	\tiny
	\caption{ Classification Performance (G-mean) of DRM and Sampling + Benchmarking Classifiers }
	\label{tab_imbalanced_sampling}%
	\resizebox{0.8\textwidth}{!}{%
		\begin{threeparttable}
			\begin{tabular}{l cccccccccc}
				\Xhline{1.8pt}
				Data Set 			& SMOTE+SVM			& BorSMOTE+SVM 		& SL-SMOTE+SVM  	& SMOTE+KNN			& BorSMOTE+KNN 		& SL-SMOTE+KNN		& SMOTE+RF			& BorSMOTE+RF		& SL-SMOTE+RF		& DRM					\\ \Xhline{1.2pt}
				haberman			& 0.6435$\pm$0.0727	& ----------------- & -----------------	& 0.5598$\pm$0.0447	& 0.5856$\pm$0.0568 & 0.5871$\pm$0.0230	& 0.5870$\pm$0.0770	& 0.5697$\pm$0.0519	& 0.5550$\pm$0.0376	& \bf{0.6541$\pm$0.0364}		\\
				ecoli1				& 0.8735$\pm$0.0399	& 0.8813$\pm$0.0101 & 0.8757$\pm$0.0367	& 0.8220$\pm$0.0905	& 0.8233$\pm$0.0667 & 0.8648$\pm$0.0878	& 0.8803$\pm$0.0443	& 0.8778$\pm$0.0544	& \bf{0.8904$\pm$0.0708}	& 0.8728$\pm$0.0466		\\		
				glass4				& 0.9003$\pm$0.1236	& 0.9003$\pm$0.1236 & 0.8883$\pm$0.1334	& \bf{0.9313$\pm$0.1257}	& \bf{0.9313$\pm$0.1257} & \bf{0.9313$\pm$0.1257}	& 0.7924$\pm$0.4431	& 0.8956$\pm$0.1353	& 0.8492$\pm$0.1951	& 0.9219$\pm$0.1257	 	\\	
				new thyroid 2		& 0.9824$\pm$0.0322	& 0.9824$\pm$0.0322 & 0.9944$\pm$0.0077	& \bf{1.0000$\pm$0.0000}	& \bf{1.0000$\pm$0.0000} & 0.9944$\pm$0.0077	& 0.9569$\pm$0.0487	& 0.9329$\pm$0.0379	& 0.9661$\pm$0.0534	& 0.9852$\pm$0.0332		\\ 
				ecoli2				& 0.8973$\pm$0.0544	& 0.8464$\pm$0.0744 & 0.9035$\pm$0.0630	& 0.9056$\pm$0.0681	& 0.8957$\pm$0.0724 & 0.9327$\pm$0.0462	& 0.8924$\pm$0.0660	& 0.8770$\pm$0.0408	& 0.9107$\pm$0.0408	& \bf{0.9415$\pm$0.0510}		\\
				glass6				& 0.9269$\pm$0.0511	& 0.8965$\pm$0.0656 & 0.8958$\pm$0.0601	& 0.9129$\pm$0.0739	& 0.9271$\pm$0.0738 & 0.9207$\pm$0.0752	& 0.9226$\pm$0.0362	& 0.9350$\pm$0.0543	& 0.8893$\pm$0.0767	& \bf{0.9329$\pm$0.0379}		\\
				yeast3				& \bf{0.9003$\pm$0.0362}	& 0.8714$\pm$0.0225 & 0.8965$\pm$0.0244	& 0.8401$\pm$0.0167	& 0.8228$\pm$0.0140 & 0.8532$\pm$0.0333	& 0.8824$\pm$0.0311	& 0.8876$\pm$0.0178	& 0.8758$\pm$0.0352	& 0.8860$\pm$0.0279		\\
				ecoli3				& \bf{0.8866$\pm$0.0205}	& 0.8701$\pm$0.0271 & 0.8829$\pm$0.0220	& 0.8081$\pm$0.0881	& ----------------- & 0.8134$\pm$0.0690	& 0.7970$\pm$0.1716	& 0.7641$\pm$0.1122	& 0.8483$\pm$0.0645	& 0.8862$\pm$0.0983		\\	
				yeast-2vs4			& \bf{0.8905$\pm$0.0362}	& 0.8405$\pm$0.0787 & 0.8877$\pm$0.0416	& -----------------	& 0.7573$\pm$0.1194 & 0.8641$\pm$0.0680	& 0.8544$\pm$0.0649	& 0.8781$\pm$0.0414	& 0.8575$\pm$0.0987	& 0.8822$\pm$0.0459		\\
				yeast-0579vs4		& 0.7936$\pm$0.0283	& \bf{0.8057$\pm$0.0495} & 0.7928$\pm$0.0825	& 0.7280$\pm$0.0421	& 0.7016$\pm$0.0774 & 0.7210$\pm$0.1273	& 0.7685$\pm$0.0703	& 0.7106$\pm$0.1253	& 0.7201$\pm$0.0964	& 0.7915$\pm$0.1058		\\					
				
				glass2				& 0.7311$\pm$0.1618	& 0.7036$\pm$0.1243 & 0.5421$\pm$0.0275	& 0.6183$\pm$0.1520	& 0.6178$\pm$0.1506 & 0.5996$\pm$0.1267	& 0.4924$\pm$0.2857	& 0.4580$\pm$0.2698	& 0.2460$\pm$0.3431	& \bf{0.7370$\pm$0.0869}		\\	
				cleveland-0vs4		& 0.9190$\pm$0.0649	& \bf{0.9451$\pm$0.0255} & 0.9158$\pm$0.0641	& 0.8040$\pm$0.1791	& 0.7585$\pm$0.2039 & 0.8063$\pm$0.1770	& 0.7048$\pm$0.1737	& 0.7660$\pm$0.2063	& 0.6278$\pm$0.4077	& 0.8814$\pm$0.1783		\\
				yeast-1vs7			& 0.7081$\pm$0.0785	& \bf{0.7452$\pm$0.0708} & 0.6865$\pm$0.0910	& 0.5666$\pm$0.0600	& 0.6002$\pm$0.0706 & 0.5321$\pm$0.0708	& 0.5249$\pm$0.1285	& 0.5556$\pm$0.0924	& 0.4127$\pm$0.2599	& 0.7174$\pm$0.1029		\\			
				ecoli4				& 0.9544$\pm$0.0233	& 0.9361$\pm$0.0233 & \bf{0.9709$\pm$0.0222}	& 0.9375$\pm$0.0750	& 0.9375$\pm$0.0750 & 0.9391$\pm$0.0765	& 0.9137$\pm$0.0719	& 0.9085$\pm$0.1266	& 0.9167$\pm$0.0725	& 0.9545$\pm$0.0573		\\ 	
				page-blocks-1-3vs45	& 0.9782$\pm$0.0395	& 0.9782$\pm$0.0395 & 0.9782$\pm$0.0395	& 0.9955$\pm$0.0074	& 0.9744$\pm$0.0452 & 0.9955$\pm$0.0074	& \bf{1.0000$\pm$0.0000}	& \bf{1.0000$\pm$0.0000}	& 0.9977$\pm$0.0051	& 0.9932$\pm$0.0062		\\					
				\Xhline{1.2pt} \\
			\end{tabular} 
		\end{threeparttable}}
		\\ \scriptsize For each data set, the best G-mean value is boldfaced. The performance is represented as average G-mean $\pm$ standard deviation.
	\end{table*}

As analyzed in \cref{sec_DRM}, all the three algorithms cost $O(mpn)$ flops. When $p$ is small, all of them have a cost of $O(m n)$ flops; 
nonetheless, they have different 
constant factors. To empirically demonstrate the computational cost, 
\cref{fig_time_all} plots the time cost versus training sample size $n$ for all three algorithms, 
{
	where we run experiments for 100 examples and record the average time as well as standard deviation. 
Here, we fix the iteration number to be 50, which guarantees the convergence of the algorithms as will be seen in later section.}
We observe that, with the same convergence tolerance, the DRM-APG has the largest time cost while the DRM-PPA the smallest on the same data set.  
\cref{fig_time_all} verifies that the average training time of the three algorithms is linear in $n$. 
It is noted that the proposed algorithms are naturally suitable for parallel computing because they treat one test example at a time.

%
%

To numerically verify the theoretical convergence properties of our algorithms, 
we  plot the value of objective function versus iteration in \cref{fig_conv_comp}. 
We use all training examples and the same testing example for the three algorithms, with $\alpha$ and $\beta$ fixed to be $10^{-3}$ and $10^{4}$, respectively. 
\cref{fig_conv_comp} indicates that the DRM-GD and DRM-PPA converge with the least and largest numbers of iterations, respectively, on most data sets; however, 
each iteration of the DRM-GD is more time consuming than that of the DRM-PPA. 
{
We have theoretically analyzed that larger $\beta$ ensures faster convergence for DRM-PPA.
As shown in \cref{fig_iterations4datasets}, similar conclusions can also be drawn for DRM-GD and DRM-APG.  
Especially, these three algorithms converge within a few iterations on all data sets with a large $\beta$.
We also report the time for convergence in \cref{fig_iterations4datasets}.
It is seen that the proposed algorithms converge fast and need only a few seconds, which implies potentials in real world applications.
}

{
	\textbf{Remark:} From the above extensive experiments, it is seen that the DRM has promising performance on various types of data as well as in various applications. Possible explanations about why the DRM performs well are as follows:
	Methods such as SVM perform training and testing stages separately, where they train a model with training set to predict the label of testing examples. It should be noted that a single model is used in the prediction for all testing examples, for each of whom the model has no discriminativeness. Meanwhile, the DRM performs training and testing in a seamlessly integrated way, where both training and testing examples are involved to train the model. Thus, the information of testing example is taken into account in the training stage and thus is revealed in the trained model, which helps to improve the prediction accuracy. Moreover, with testing examples as input, models are trained independently for each of them, which renders the models contain discriminative information specialized for the corresponding target examples.

}

\subsection{Application to Imbalanced Data}
Imbalanced data are often observed in real world applications. Typically, inimbalanced data the classes are not equally distributed. In this experiment, we evaluate the proposed method in classification imbalanced data. For the test, we have used 15 data sets from KEEL database, where the key statistics of them are provided in \cref{tab_summary_imbalanced}. Here, imbalance ratio (IR) measures the number of examples from large and small classes, where larger value indicates more imbalance. We adopt G-mean for evaluation, which is often used  in the literature \cite{su2007evaluation,barua2014mwmote}. For the definition of G-mean, we refer to the confusion matrix in \cref{Tab_confusion_matrix} and define 
\begin{itemize}
\item True Positive Rate = $TP_{rate}$ = TP / (TP + FN),
\item True Negative Rate = $TN_{rate}$ = TN / (TN + FP),
\item False Positive Rate = $FP_{rate}$ = FP / (TN + FP),
\item False Negative Rate = $FN_{rate}$ = FN / (TP + FN).
\end{itemize}
Then G-mean is defined as
\begin{equation}
\begin{aligned}
\text{$G$-$mean$}	&	= \sqrt{\frac{TP}{TP+FN} \times \frac{TN}{TN+FP}}.
\end{aligned}
\end{equation}

These data sets were intended for evaluation of classifiers on imbalanced data and originally splited into five folds. 
{ In our experiments, we follow the original splits to conduct 5 trials and report the average performance.
We include one more classifier in this test, i.e., sparse linear discriminant analysis (SLDA) \cite{clemmensen2011sparse}.
We do not include this method in previous experiments because it is over lengthy to train on high-dimensional data. 
For its weight on the elastic-net regression, we choose the values from $\{0.001,0.01,0.1,1,10,100,1000\}$ by LOOCV.
The reduced dimension is determined by the LOOCV from the set $\{1,3,5,7,9\}$.} 
Except for the change of evaluation metric, all the other settings remain the same as in \cref{sec_geneexpression}. 
We report the average performance on these five splits in \cref{tab_imbalanced}. 
From the results, it is seen that the DRM obtains the best performance on 12 out of 15 data sets. 
The SVM, KNN, and RF obtain one best performance, respectively. 
It is noted that in the cases where the DRM is not the best, its performance is still quite competive. 
For example, on page-blocks-1-3vs4 data set, the DRM has a G-mean value of 0.9932, which is slightly inferior to RF by about 0.005. In contrast, when the SVM, KNN, and RF are not the best, in many cases, their performance is far from being comparable to the DRM. 
These observations suggest the effectiveness of the DRM in imbalanced data classification.

{
Moreover, to further illustrate the effectiveness of the DRM, we also report the results of baseline classifiers with up-sampling technique including synthetic minority over-sampling technique (SMOTE) \cite{chawla2002smote} and its variants such as borderline SMOTE (BorSMOTE) \cite{han2005borderline} and Safe Level SMOTE (SL-SMOTE) \cite{bunkhumpornpat2009safe}.	
Due to space limit, we report the results of SVM, KNN, and RF since the have relatively better performance.
All settings remail the same except that SVM, KNN and RF are performed on data sets after up-sampling is pre-processed. 
We use a number of 5 neighbours for all the up-sampling methods to generate synthetic examples. 
For SVM and KNN, we merge the results of different kernels or distances and report whichever is higher. 
For DRM, we report the higher results as in \cref{tab_imbalanced}.
We report the classification results in \cref{tab_imbalanced_sampling}.
It is seen that with up-sampling, the performance of baseline methods are significantly improved whereas the DRM still shows comparable performance. 
It should be noted that the performance of DRM can be further improved if sampling techniques such as SMOTE is performed.
However, it is not within the scope of this paper to fully discuss sampling technique and thus we do not fully expand it. 
Also, it is worthy mentioning that the DRM itself has competitive performance compared with sampling technique + baseline classifiers.

	}

{
A deep thinking about the effectiveness of DRM on imbalanced data is as follows. 
It is seen the model of \cref{eq_DRMK} considers different weights on different groups with matrix $B$. 
The weight is determined by the group size as defined in (6). 
Thus, the weight can effectively alleviate the adverse effect of class imbalance in the data. 
In fact, the spirit of this approach is similar to algorithm-level methods for imbalanced data classification. 
Moreover, the decision rule in \cref{kmindist_decision} has clear geometrical interpretation such that class which the testing example truly belongs to is picked. 
Thus, from the model itself, it is convincing to expect promising performance of the DRM. }

\subsection{Abalation Study}
{ 
In this subsesction, we will conduct extensive experiments to illustrate the significance of the component terms in our model.
Without loss of generality, we conduct abalation study on imbalanced data sets for illustration.
Similar observations and conclusions can be seen and drawn on other types of data. 
Since the first term of \cref{eq_DRC} is the loss while the third term is known to avoid overfitting issue and strengthen robustness in ridge regression, 
we focus on the second term, i.e., $S_w(w)$. 
In particular, we will compare DRM with the classic kernel ridge regression (KRR) for illustration.
First, to show the significance of $S_w(w)$, we compare the best performance of KRR and DRM with all parameters selected by CV where the setting follows \cref{sec:app_small}. 
In particular, we show the results of using RBF and polynomial kernels separately  in \cref{fig_abalation_overall}. 
It is observed that the DRM has better performance than KRR, which reveals that the improvements benefit from the discriminative information introduced by $S_w(w)$. 
To further investigate how $S_w(w)$ performs, we compare KRR and DRM in the following way. We range the value of $\beta$ within $\{10^{-3},10^{-2},10^{-1},10^{0},10^{1},10^{2},10^{3}\}$. 
For each $\beta$ value, we report the best performances of KRR and DRM, where the other parameters are optimally selected among all possible combinations.
We separately  show the results of RBF and polynomial kernels in \cref{fig_abalation_alpha}.
It is observed that the performance curve of DRM is almost always above that of KRR, implying that with the term of $S_w(w)$ the DRM outperforms KRR with any $\beta$ value.
Thus, it is convincing to claim the significance of the terms in our model.

%


}
\section{Conclusion}
\label{sec_conclusion}
In this paper, we introduce a type of discriminative ridge regressions for supervised classification. 
This class of discriminative ridge regression models can be regarded as an extension to several existing regression models in the literature such as ridge, lasso, and grouped lasso. 
This class of new models explicitly incorporates discriminative information between different classes and hence is more suitable for classification than those existing regression models. 
As a special case of the new models, we focus on a quadratic model of the DRM as our classifier. 
The DRM gives rise to a closed-form solution, which is computationally suitable for high-dimensional data with a small sample size. 
For large-scale data, this closed-form solution is time demanding and thus we establish three iterative algorithms that have reduced computational cost. 
In particular, these three algorithms with the linear kernel all have linear cost in sample size and theoretically proven guarantee to globally converge.  
Extensive experiments on standard, real-world data sets demonstrate that the DRM outperforms several state-of-the-art classification methods, especially on high-dimensional data or imbalanced data; furthermore, the three iterative algorithms with the linear kernel all have desirable linear cost, global convergence, and scalability while being significantly more accurate than the linear SVM. {Ablation study confirms the significance of the key component in the proposed method.}

\bibliographystyle{IEEEtrans}
\bibliography{drm_tnnls_r1_v1_arxiv}  

\begin{thebibliography}{10}
\providecommand{\url}[1]{#1}
\csname url@samestyle\endcsname
\providecommand{\newblock}{\relax}
\providecommand{\bibinfo}[2]{#2}
\providecommand{\BIBentrySTDinterwordspacing}{\spaceskip=0pt\relax}
\providecommand{\BIBentryALTinterwordstretchfactor}{4}
\providecommand{\BIBentryALTinterwordspacing}{\spaceskip=\fontdimen2\font plus
\BIBentryALTinterwordstretchfactor\fontdimen3\font minus
  \fontdimen4\font\relax}
\providecommand{\BIBforeignlanguage}[2]{{%
\expandafter\ifx\csname l@#1\endcsname\relax
\typeout{** WARNING: IEEEtranS.bst: No hyphenation pattern has been}%
\typeout{** loaded for the language `#1'. Using the pattern for}%
\typeout{** the default language instead.}%
\else
\language=\csname l@#1\endcsname
\fi
#2}}
\providecommand{\BIBdecl}{\relax}
\BIBdecl

\bibitem{alain2016understanding}
G.~Alain and Y.~Bengio, ``Understanding intermediate layers using linear
  classifier probes,'' \emph{arXiv preprint arXiv:1610.01644}, 2016.

\bibitem{acskan2014svm}
A.~A{\c{s}}kan and S.~Say{\i}n, ``Svm classification for imbalanced data sets
  using a multiobjective optimization framework,'' \emph{Annals of Operations
  Research}, vol. 216, no.~1, pp. 191--203, 2014.

\bibitem{Bache+Lichman:2013}
\BIBentryALTinterwordspacing
K.~Bache and M.~Lichman, ``{UCI} machine learning repository,'' 2013. [Online].
  Available: \url{http://archive.ics.uci.edu/ml}
\BIBentrySTDinterwordspacing

\bibitem{barua2014mwmote}
S.~Barua, M.~M. Islam, X.~Yao, and K.~Murase, ``Mwmote--majority weighted
  minority oversampling technique for imbalanced data set learning,''
  \emph{IEEE Transactions on Knowledge and Data Engineering}, vol.~26, no.~2,
  pp. 405--425, 2014.

\bibitem{basri2003lambertian}
R.~Basri and D.~Jacobs, ``{Lambertian reflectance and linear subspaces},''
  \emph{IEEE Transactions on Pattern Analysis and Machine Intelligence},
  vol.~25, no.~2, pp. 218--233, 2003.

\bibitem{BeckTeboulle2009}
A.~Beck and M.~Teboulle, ``A fast iterative shrinkage-thresholding algorithm
  for linear inverse problems,'' \emph{SIAM Journal of Imaging Science},
  vol.~2, no.~1, pp. 183--202, 2009.

\bibitem{belhumeur1997eigenfaces}
P.~N. Belhumeur, J.~P. Hespanha, and D.~J. Kriegman, ``Eigenfaces vs.
  fisherfaces: Recognition using class specific linear projection,'' \emph{IEEE
  Transactions on pattern analysis and machine intelligence}, vol.~19, no.~7,
  pp. 711--720, 1997.

\bibitem{Bellhumer1997}
P.~N. Bellhumer, J.~Hespanha, and D.~Kriegman, ``Eigenfaces vs. fisherfaces:
  Recognition using class specific linear projection,'' \emph{IEEE Transactions
  on Pattern Analysis and Machine Intelligence}, vol.~17, no.~7, pp. 711--720,
  1997.

\bibitem{bhowan2013evolving}
U.~Bhowan, M.~Johnston, M.~Zhang, and X.~Yao, ``Evolving diverse ensembles
  using genetic programming for classification with unbalanced data,''
  \emph{IEEE Transactions on Evolutionary Computation}, vol.~17, no.~3, pp.
  368--386, 2013.

\bibitem{bottou2010large}
L.~Bottou, ``Large-scale machine learning with stochastic gradient descent,''
  in \emph{Proceedings of COMPSTAT'2010}.\hskip 1em plus 0.5em minus
  0.4em\relax Springer, 2010, pp. 177--186.

\bibitem{Boyd2004}
S.~Boyd and L.~Vandenberghe, \emph{{Convex Optimization}}.\hskip 1em plus 0.5em
  minus 0.4em\relax Cambridge University Press, 2004.

\bibitem{breiman2001random}
L.~Breiman, ``Random forests,'' \emph{Machine learning}, vol.~45, no.~1, pp.
  5--32, 2001.

\bibitem{bunkhumpornpat2009safe}
C.~Bunkhumpornpat, K.~Sinapiromsaran, and C.~Lursinsap, ``Safe-level-smote:
  Safe-level-synthetic minority over-sampling technique for handling the class
  imbalanced problem,'' in \emph{Pacific-Asia conference on knowledge discovery
  and data mining}.\hskip 1em plus 0.5em minus 0.4em\relax Springer, 2009, pp.
  475--482.

\bibitem{bunkhumpornpat2012dbsmote}
------, ``Dbsmote: density-based synthetic minority over-sampling technique,''
  \emph{Applied Intelligence}, vol.~36, no.~3, pp. 664--684, 2012.

\bibitem{cai2018feature}
J.~Cai, J.~Luo, S.~Wang, and S.~Yang, ``Feature selection in machine learning:
  A new perspective,'' \emph{Neurocomputing}, vol. 300, pp. 70--79, 2018.

\bibitem{castellanos2018oversampling}
F.~J. Castellanos, J.~J. Valero-Mas, J.~Calvo-Zaragoza, and J.~R. Rico-Juan,
  ``Oversampling imbalanced data in the string space,'' \emph{Pattern
  Recognition Letters}, vol. 103, pp. 32--38, 2018.

\bibitem{chang2001ijcnn}
C.-C. Chang and C.-J. Lin, ``Ijcnn 2001 challenge: Generalization ability and
  text decoding,'' in \emph{Proceedings of IJCNN}.\hskip 1em plus 0.5em minus
  0.4em\relax IEEE, 2001, pp. 1031--1036.

\bibitem{chang2011libsvm}
------, ``Libsvm: A library for support vector machines,'' \emph{ACM
  Transactions on Intelligent Systems and Technology}, vol.~2, no.~3, pp.
  1--27, 2011.

\bibitem{chang2008coordinate}
K.-W. Chang, C.-J. Hsieh, and C.-J. Lin, ``Coordinate descent method for
  large-scale l2-loss linear support vector machines,'' \emph{The Journal of
  Machine Learning Research}, vol.~9, pp. 1369--1398, 2008.

\bibitem{chawla2002smote}
N.~V. Chawla, K.~W. Bowyer, L.~O. Hall, and W.~P. Kegelmeyer, ``Smote:
  synthetic minority over-sampling technique,'' \emph{Journal of artificial
  intelligence research}, vol.~16, pp. 321--357, 2002.

\bibitem{Cheng_PAMI2014}
Q.~Cheng, H.~Zhou, J.~Cheng, and H.~Li, ``{A minimax framework for
  classification with applications to images and high dimensional data},''
  \emph{IEEE Transactions on Pattern Analysis and Machine Intelligence},
  vol.~36, no.~11, pp. 2117--2130, 2014.

\bibitem{cheng2011fisher}
Q.~Cheng, H.~Zhou, and J.~Cheng, ``The fisher-markov selector: Fast selecting
  maximally separable feature subset for multiclass classification with
  applications to high-dimensional data,'' \emph{IEEE Transactions on Pattern
  Analysis and Machine Intelligence}, vol.~33, no.~6, pp. 1217--1233, 2011.

\bibitem{clemmensen2011sparse}
L.~Clemmensen, T.~Hastie, D.~Witten, and B.~Ersb{\o}ll, ``Sparse discriminant
  analysis,'' \emph{Technometrics}, vol.~53, no.~4, pp. 406--413, 2011.

\bibitem{cortes1995support}
C.~Cortes and V.~Vapnik, ``Support-vector networks,'' \emph{Machine learning},
  vol.~20, no.~3, pp. 273--297, 1995.

\bibitem{del2014use}
S.~Del~R{\'\i}o, V.~L{\'o}pez, J.~M. Ben{\'\i}tez, and F.~Herrera, ``On the use
  of mapreduce for imbalanced big data using random forest,'' \emph{Information
  Sciences}, vol. 285, pp. 112--137, 2014.

\bibitem{devroye1996probabilistic}
L.~Devroye, L.~Gyorfi, and G.~Lugosi, \emph{A Probabilistic Theory of Pattern
  Recognition}.\hskip 1em plus 0.5em minus 0.4em\relax Springer, 1996, vol.~31.

\bibitem{duarte2004vehicle}
M.~F. Duarte and Y.~H. Hu, ``Vehicle classification in distributed sensor
  networks,'' \emph{Journal of Parallel and Distributed Computing}, vol.~64,
  no.~7, pp. 826--838, 2004.

\bibitem{duda2001}
R.~Duda, P.~Hart, and D.~Stork, \emph{{Pattern Classification, 2nd ed.}}\hskip
  1em plus 0.5em minus 0.4em\relax John Wiley {\&} Sons, 2001.

\bibitem{fan2008liblinear}
R.-E. Fan, K.-W. Chang, C.-J. Hsieh, X.-R. Wang, and C.-J. Lin, ``Liblinear: A
  library for large linear classification,'' \emph{The Journal of Machine
  Learning Research}, vol.~9, pp. 1871--1874, 2008.

\bibitem{ferris2004semismooth}
M.~C. Ferris and T.~S. Munson, ``Semismooth support vector machines,''
  \emph{Mathematical Programming}, vol. 101, no.~1, pp. 185--204, 2004.

\bibitem{fine2002efficient}
S.~Fine and K.~Scheinberg, ``Efficient svm training using low-rank kernel
  representations,'' \emph{The Journal of Machine Learning Research}, vol.~2,
  pp. 243--264, 2002.

\bibitem{fukunaga1990introduction}
K.~Fukunaga, \emph{{Introduction to Statistical Pattern Recognition}}.\hskip
  1em plus 0.5em minus 0.4em\relax Academic Press, 1990.

\bibitem{gallego2018clustering}
A.-J. Gallego, J.~Calvo-Zaragoza, J.~J. Valero-Mas, and J.~R. Rico-Juan,
  ``Clustering-based k-nearest neighbor classification for large-scale data
  with neural codes representation,'' \emph{Pattern Recognition}, vol.~74, pp.
  531--543, 2018.

\bibitem{gan2018supervised}
J.~Gan, G.~Wen, H.~Yu, W.~Zheng, and C.~Lei, ``Supervised feature selection by
  self-paced learning regression,'' \emph{Pattern Recognition Letters}, 2018.

\bibitem{georghiades2001few}
A.~S. Georghiades, P.~N. Belhumeur, and D.~Kriegman, ``From few to many:
  Illumination cone models for face recognition under variable lighting and
  pose,'' \emph{IEEE Transactions on Pattern Analysis and Machine
  Intelligence}, vol.~23, no.~6, pp. 643--660, 2001.

\bibitem{gonzalez2017class}
S.~G{\'o}nzalez, S.~Garc{\'\i}a, M.~L{\'a}zaro, A.~R. Figueiras-Vidal, and
  F.~Herrera, ``Class switching according to nearest enemy distance for
  learning from highly imbalanced data-sets,'' \emph{Pattern Recognition},
  vol.~70, pp. 12--24, 2017.

\bibitem{Grant2011}
M.~Grant, S.~Boyd, and Y.~Ye, ``{CVX: Matlab software for disciplined convex
  programming},'' \emph{Avialable at http://www.stanford.edu/boyd/cvx}.

\bibitem{gu2018fast}
X.~Gu, F.-l. Chung, and S.~Wang, ``Fast convex-hull vector machine for training
  on large-scale ncrna data classification tasks,'' \emph{Knowledge-Based
  Systems}, vol. 151, pp. 149--164, 2018.

\bibitem{guo2007regularized}
Y.~Guo, T.~Hastie, and R.~Tibshirani, ``Regularized linear discriminant
  analysis and its application in microarrays,'' \emph{Biostatistics}, vol.~8,
  no.~1, pp. 86--100, 2007.

\bibitem{han2005borderline}
H.~Han, W.-Y. Wang, and B.-H. Mao, ``Borderline-smote: a new over-sampling
  method in imbalanced data sets learning,'' in \emph{International conference
  on intelligent computing}.\hskip 1em plus 0.5em minus 0.4em\relax Springer,
  2005, pp. 878--887.

\bibitem{hastie2009elements}
T.~Hastie, R.~Tibshirani, and J.~Friedman, \emph{The Elements of Statistical
  Learning}.\hskip 1em plus 0.5em minus 0.4em\relax Springer, 2009.

\bibitem{he2009learning}
H.~He and E.~A. Garcia, ``Learning from imbalanced data,'' \emph{IEEE
  Transactions on knowledge and data engineering}, vol.~21, no.~9, pp.
  1263--1284, 2009.

\bibitem{ho03}
J.~Ho, M.~Yang, J.~Lim, K.~Lee, and D.~Kriegman, ``{Clustering appearancs of
  objects under varying illumination conditions},'' in \emph{Proceedings of
  Computer Vision and Pattern Recognition}, 2003.

\bibitem{hond1997distinctive}
D.~Hond and L.~Spacek, ``Distinctive descriptions for face processing.'' in
  \emph{BMVC}, no. 0.2, 1997, pp. 0--4.

\bibitem{hsieh2008dual}
C.-J. Hsieh, K.-W. Chang, C.-J. Lin, S.~S. Keerthi, and S.~Sundararajan, ``A
  dual coordinate descent method for large-scale linear svm,'' in
  \emph{Proceedings of the 25th International Conference on Machine
  learning}.\hskip 1em plus 0.5em minus 0.4em\relax ACM, 2008, pp. 408--415.

\bibitem{Joachims2006training}
T.~Joachims, ``Training linear svms in linear time,'' in \emph{Proceedings of
  the 12th ACM SIGKDD International Conference on Knowledge Discovery and Data
  Mining}.\hskip 1em plus 0.5em minus 0.4em\relax ACM, 2006, pp. 217--226.

\bibitem{keerthi2005modified}
S.~S. Keerthi and D.~DeCoste, ``A modified finite newton method for fast
  solution of large scale linear svms,'' in \emph{Journal of Machine Learning
  Research}, 2005, pp. 341--361.

\bibitem{kiela2018efficient}
D.~Kiela, E.~Grave, A.~Joulin, and T.~Mikolov, ``Efficient large-scale
  multi-modal classification,'' in \emph{Thirty-Second AAAI Conference on
  Artificial Intelligence}, 2018.

\bibitem{krawczyk2016learning}
B.~Krawczyk, ``Learning from imbalanced data: open challenges and future
  directions,'' \emph{Progress in Artificial Intelligence}, vol.~5, no.~4, pp.
  221--232, 2016.

\bibitem{Krizhevsky2012ImageNet}
A.~Krizhevsky, I.~Sutskever, and G.~E. Hinton, ``Imagenet classification with
  deep convolutional neural networks,'' in \emph{Advances in neural information
  processing systems}, 2012, pp. 1097--1105.

\bibitem{LevitinPolyak1966}
E.~S. Levitin and B.~T. Polyak, ``Constrained minimization methods,''
  \emph{Compu. Math. Math. Phys}, vol.~6, pp. 787--823, 1966.

\bibitem{li2018hyperspectral}
J.~Li, Q.~Du, Y.~Li, and W.~Li, ``Hyperspectral image classification with
  imbalanced data based on orthogonal complement subspace projection,''
  \emph{IEEE Transactions on Geoscience and Remote Sensing}, vol.~56, no.~7,
  pp. 3838--3851, 2018.

\bibitem{lin2008trust}
C.-J. Lin, R.~C. Weng, and S.~S. Keerthi, ``Trust region newton method for
  logistic regression,'' \emph{The Journal of Machine Learning Research},
  vol.~9, pp. 627--650, 2008.

\bibitem{liu2019rough}
K.~Liu, X.~Yang, H.~Yu, J.~Mi, P.~Wang, and X.~Chen, ``Rough set based
  semi-supervised feature selection via ensemble selector,''
  \emph{Knowledge-Based Systems}, vol. 165, pp. 282--296, 2019.

\bibitem{mangasarian2002finite}
O.~L. Mangasarian, ``A finite newton method for classification,''
  \emph{Optimization Methods and Software}, vol.~17, no.~5, pp. 913--929, 2002.

\bibitem{maratea2014adjusted}
A.~Maratea, A.~Petrosino, and M.~Manzo, ``Adjusted f-measure and kernel scaling
  for imbalanced data learning,'' \emph{Information Sciences}, vol. 257, pp.
  331--341, 2014.

\bibitem{martinez1998ar}
A.~M. Martinez, ``The ar face database,'' \emph{CVC Technical Report}, vol.~24,
  1998.

\bibitem{nakayama2007gene}
R.~Nakayama, T.~Nemoto, H.~Takahashi, T.~Ohta, A.~Kawai, K.~Seki, T.~Yoshida,
  Y.~Toyama, H.~Ichikawa, and T.~Hasegawa, ``Gene expression analysis of soft
  tissue sarcomas: characterization and reclassification of malignant fibrous
  histiocytoma,'' \emph{Modern Pathology}, vol.~20, no.~7, pp. 749--759, 2007.

\bibitem{Nesterov1983}
Y.~Nesterov, ``A method of solving a convex programming problem with
  convergence rate of $o(1/k^2)$,'' \emph{Soviet Math. Doklady}, vol.~27, pp.
  372--376, 1983.

\bibitem{ng2015diversified}
W.~W. Ng, J.~Hu, D.~S. Yeung, S.~Yin, and F.~Roli, ``Diversified
  sensitivity-based undersampling for imbalance classification problems,''
  \emph{IEEE transactions on cybernetics}, vol.~45, no.~11, pp. 2402--2412,
  2015.

\bibitem{nikolaou2016cost}
N.~Nikolaou, N.~Edakunni, M.~Kull, P.~Flach, and G.~Brown, ``Cost-sensitive
  boosting algorithms: Do we really need them?'' \emph{Machine Learning}, vol.
  104, no. 2-3, pp. 359--384, 2016.

\bibitem{rockafellar1976monotone}
R.~T. Rockafellar, ``Monotone operators and the proximal point algorithm,''
  \emph{SIAM journal on control and optimization}, vol.~14, no.~5, pp.
  877--898, 1976.

\bibitem{roweis2000nonlinear}
S.~Roweis and L.~Saul, ``{Nonlinear dimensionality reduction by locally linear
  embedding},'' \emph{Science}, vol. 290, no. 5500, p. 2323, 2000.

\bibitem{rui2002relevance}
Y.~Rui, T.~Huang, M.~Ortega, and S.~Mehrotra, ``{Relevance feedback: A power
  tool for interactive content-based image retrieval},'' \emph{IEEE
  Transactions on Circuits and Systems for Video Technology}, vol.~8, no.~5,
  pp. 644--655, 2002.

\bibitem{scholkopf2002learning}
B.~Scholkopf and A.~Smola, \emph{{Learning with Kernels}}.\hskip 1em plus 0.5em
  minus 0.4em\relax MIT Press, 2002.

\bibitem{shalev2011pegasos}
S.~Shalev-Shwartz, Y.~Singer, N.~Srebro, and A.~Cotter, ``Pegasos: Primal
  estimated sub-gradient solver for svm,'' \emph{Mathematical Programming},
  vol. 127, no.~1, pp. 3--30, 2011.

\bibitem{soda2011multi}
P.~Soda, ``A multi-objective optimisation approach for class imbalance
  learning,'' \emph{Pattern Recognition}, vol.~44, no.~8, pp. 1801--1810, 2011.

\bibitem{sorensen92}
D.~C. Sorensen, ``Implicit application of polynomial filters in a k-step
  arnoldi method,'' \emph{SIAM journal matrix ana. appl.}, vol.~13, pp.
  357--385, 1992.

\bibitem{su2007evaluation}
C.-T. Su and Y.-H. Hsiao, ``An evaluation of the robustness of mts for
  imbalanced data,'' \emph{IEEE Transactions on knowledge and data
  engineering}, vol.~19, no.~10, 2007.

\bibitem{sun2006neuronal}
L.~Sun, A.-M. Hui, Q.~Su, A.~Vortmeyer, Y.~Kotliarov, S.~Pastorino,
  A.~Passaniti, J.~Menon, J.~Walling, R.~Bailey \emph{et~al.}, ``Neuronal and
  glioma-derived stem cell factor induces angiogenesis within the brain,''
  \emph{Cancer cell}, vol.~9, no.~4, pp. 287--300, 2006.

\bibitem{semeion2014}
Tactile, ``Semeion data set,'' \emph{Semeion Research Center of Sciences of
  Communication, via Sersale 117, 00128 Rome, Italy. Tattile Via Gaetano
  Donizetti, 1-3-5,25030 Mairano (Brescia), Italy.}

\bibitem{vapnik2000nature}
V.~Vapnik, \emph{The nature of statistical learning theory}.\hskip 1em plus
  0.5em minus 0.4em\relax springer, 2000.

\bibitem{vapnik1998statistical}
V.~N. Vapnik, \emph{Statistical Learning Theory}.\hskip 1em plus 0.5em minus
  0.4em\relax Wiley New York, 1998, vol.~2.

\bibitem{veropoulos1999controlling}
K.~Veropoulos, C.~Campbell, N.~Cristianini \emph{et~al.}, ``Controlling the
  sensitivity of support vector machines,'' in \emph{Proceedings of the
  international joint conference on AI}, vol.~55, 1999, p.~60.

\bibitem{wang2019dynamic}
Y.~Wang, W.~Gan, J.~Yang, W.~Wu, and J.~Yan, ``Dynamic curriculum learning for
  imbalanced data classification,'' in \emph{Proceedings of the IEEE
  International Conference on Computer Vision}, 2019, pp. 5017--5026.

\bibitem{witten2011penalized}
D.~M. Witten and R.~Tibshirani, ``Penalized classification using fisher's
  linear discriminant,'' \emph{Journal of the Royal Statistical Society: Series
  B (Statistical Methodology)}, vol.~73, no.~5, pp. 753--772, 2011.

\bibitem{xu2019review}
X.~Xu, T.~Liang, J.~Zhu, D.~Zheng, and T.~Sun, ``Review of classical
  dimensionality reduction and sample selection methods for large-scale data
  processing,'' \emph{Neurocomputing}, vol. 328, pp. 5--15, 2019.

\bibitem{yen2009cluster}
S.-J. Yen and Y.-S. Lee, ``Cluster-based under-sampling approaches for
  imbalanced data distributions,'' \emph{Expert Systems with Applications},
  vol.~36, no.~3, pp. 5718--5727, 2009.

\bibitem{you2018scalable}
C.~You, C.~Li, D.~P. Robinson, and R.~Vidal, ``Scalable exemplar-based subspace
  clustering on class-imbalanced data,'' in \emph{Proceedings of the European
  Conference on Computer Vision (ECCV)}, 2018, pp. 67--83.

\bibitem{zhang2004solving}
T.~Zhang, ``Solving large scale linear prediction problems using stochastic
  gradient descent algorithms,'' in \emph{Proceedings of the 21st International
  Conference on Machine Learning}.\hskip 1em plus 0.5em minus 0.4em\relax ACM,
  2004, p. 116.

\bibitem{chengzhou2010nips}
H.~Zhou and Q.~Cheng, ``{Sufficient conditions for generating group level
  sparsity in a robust minimax framework},'' in \emph{Advances in Neural
  Information Processing Systems}, 2010.

\bibitem{ZhouChengTNNLS2014}
------, ``{A scalable projective scaling algorithm for $l_p$ loss with convex
  penalizations},'' \emph{IEEE Transactions on Neural Networks and Learning
  Systems}, vol.~26, no.~2, pp. 265--272, 2015.

\end{thebibliography}
%
%

\begin{IEEEbiographynophoto}
	{Chong Peng}
	received his PhD in Computer Science from Southern Illinois University, Carbondale, IL, USA in 2017. Currently, he is an assistant professor at College of Computer Science and Techonlogy, Qingdao University, China. He has published more than 30 research papers in top-tier conferences and journals, including
	AAAI, ICDE, CVPR, SIGKDD, ICDM, CIKM, ACM TIST, ACM TKDD, Neural Networks, IEEE TIP, TMM, and TGRS. His research interests include pattern recognition, data mining, machine learning, and computer vision. 
\end{IEEEbiographynophoto}

\begin{IEEEbiographynophoto}
	{Qiang Cheng}
	received the BS and MS degrees from the College of Mathematical
	Science at Peking University, China, and the PhD degree from the Department of
	Electrical and Computer Engineering at the University of Illinois, Urbana-Champaign.
	Currently, he is an associate professor at the Institute for Biomedical Informatics
	and the Department of Computer Science at the University of Kentucky. He
	previously worked as a faculty fellow at the Air Force Research Laboratory,
	Wright-Patterson, OH, and a senior researcher and senior research scientist at
	Siemens Medical Solutions, Siemens Corporate Research, Siemens Corp., Princeton,
	NJ. His research interests include data science, machine learning, pattern recognition, artificial
	intelligence, and biomedical informatics. He has published about 100 peer-reviewed papers in various
	premium venues including IEEE TPAMI, TNNLS, TSP, NIPS, CVPR, ICDM, AAAI, ICDE, ACM TIST, TKDD,
	and KDD. 
	He has a number of international patents issued or filed with Southern Illinois University
	Carbondale, IBM T.J. Watson Research Laboratory, and Siemens Medical.
\end{IEEEbiographynophoto}

\end{document}